\documentclass{article} %
\usepackage{iclr2024_conference,times}

\usepackage{amsmath,amsfonts,bm}

\def\eqref#1{equation~\ref{#1}}

\def\1{\bm{1}}

\DeclareMathAlphabet{\mathsfit}{\encodingdefault}{\sfdefault}{m}{sl}
\SetMathAlphabet{\mathsfit}{bold}{\encodingdefault}{\sfdefault}{bx}{n}

\DeclareMathOperator*{\argmin}{arg\,min}

\usepackage{todonotes}

\newcommand{\adam}[1]{{\color{blue}Adam: #1}}

\newcommand{\gray}[1]{\textcolor{gray}{#1}}

\usepackage{graphicx}
\usepackage{hyperref}
\usepackage{url}
\usepackage{threeparttable}

\usepackage{xparse}
\usepackage{xspace}
\usepackage{algorithm}%
\usepackage{algpseudocode}%
\usepackage{fixltx2e}
\usepackage{amsmath}
\usepackage{amsthm}
\usepackage{tikz}
\usepackage{float}
\usepackage{multirow}
\usepackage{multicol}
\usepackage{colortbl}

\usepackage{microtype}
\usepackage{graphicx}
\usepackage{caption}
\usepackage{subcaption}
\usepackage{booktabs} %
\usepackage{bbm}
\usepackage[shortlabels]{enumitem}

\usepackage{hyperref,amssymb,enumitem}

\usepackage{wrapfig}

\usepackage{cleveref}

\setlist[itemize]{leftmargin=*}
\setlist[enumerate]{leftmargin=*}

\newcommand{\ie}{\textit{i.e.,}\@\xspace}
\newcommand{\eg}{\textit{e.g.,}\@\xspace}

\newcommand{\aug}{\text{Aug}}

\newcommand{\Aug}{\text{Aug}}

\newcommand{\lalign}{\mathcal{L}_{\text{align}}}

\newcommand{\alignexp}[2]%
{\underset{ {#2}', {#2}'' \sim \Aug(x) }{\mathbb{E}} [ d\left({#1}({#2}'), {#1}({#2}'')\right) ] }

\newcommand{\dmetric}[2]{d({#1},{#2})}

\newcommand{\canary}{candidate\xspace}
\newcommand{\canaries}{candidates\xspace}
\newcommand{\method}{method\xspace}
\newcommand{\methods}{methods\xspace}
\newcommand{\Method}{method\xspace}
\newcommand{\Methods}{methods\xspace}
\newcommand{\name}{\texttt{SSLMem}\xspace}

\usepackage{amsmath}

\newtheorem{theorem}{Theorem}
\newtheorem{lemma}[theorem]{Lemma}

\newtheorem{definition}[theorem]{Definition}

\graphicspath{{images/}}

\newif\ifdraft
\drafttrue

\usepackage{fancyvrb}

\RecustomVerbatimCommand{\VerbatimInput}{VerbatimInput}%
{fontsize=\footnotesize,
 frame=lines,  %
 framesep=2em, %
 commandchars=\|\(\), %
 commentchar=*        %
}

\ifdraft
\newcommand{\mynote}[1]{\textcolor{red}{[note: #1]}}

\newcommand{\mytodo}[1]{\textcolor{red}{[todo: #1]}}
\newcommand{\mycomment}[1]{\textcolor{red}{[comment: #1]}}
\newcommand{\chris}[1]{\textcolor{red}{Chris: #1}}
\newcommand{\ahmad}[1]{\textcolor{darkpastelgreen}{[Ahmad: #1]}}
\newcommand{\vinith}[1]{\textcolor{blue}{Vinith: #1}}
\newcommand{\xiao}[1]{\textcolor{blue}{xiao: #1}}
\definecolor{chocolate(traditional)}{rgb}{0.48, 0.25, 0.0}

\definecolor{darkpastelgreen}{rgb}{0.01, 0.75, 0.24}
\newcommand{\natalie}[1]{\textcolor{darkpastelgreen}{natalie: #1}}
\definecolor{pistachio}{rgb}{0.58, 0.77, 0.45}

\newcommand{\jonas}[1]{\textcolor{blue}{[Jonas: #1]}}

\newcommand{\adelin}[1]{\textcolor{red}{[Adelin: #1]}}
\newcommand{\mohammad}[1]{\textcolor{red}{[Mohammad: #1]}}
\definecolor{amber(sae/ece)}{rgb}{1.0, 0.49, 0.0}
\else
\newcommand{\chris}[1]{}
\newcommand{\vinith}[1]{}
\newcommand{\adam}[1]{}
\newcommand{\yunxiang}[1]{}
\newcommand{\natalie}[1]{}
\newcommand{\jonas}[1]{}
\newcommand{\adelin}[1]{}
\newcommand{\mynote}[1]{}
\newcommand{\xiao}[1]{}
\newcommand{\mytodo}[1]{}
\newcommand{\mynote}[1]{}
\newcommand{\mycomment}[1]{}
\newcommand{\ahmad}[1]{}
\newcommand{\mohammad}[1]{}
\newcommand{\sierra}[1]{}
\newcommand{\armin}[1]{}
\fi

\title{Memorization in Self-Supervised Learning Improves Downstream Generalization}

\author{Wenhao Wang\thanks{Equal contribution. Correspondence to: adam.dziedzic@sprintml.com and boenisch@cispa.de.\\ \hspace{1cm}\hfill$^{\dagger}$Part of the work was done while the authors were at the University of Toronto and the Vector Institute.}\ \ $ ^{1}$, Muhammad Ahmad Kaleem$^{*}$$^{2}$, Adam Dziedzic$^{*}$$^{\dagger}$$^{1}$,\\
\textbf{Michael Backes}$^{1}$, \textbf{Nicolas Papernot}$^{2}$, \textbf{Franziska Boenisch}$^{\dagger}$$^{1}$\\
$^{1}$CISPA, $^{2}$University of Toronto and Vector Institute}

\iclrfinalcopy %
\begin{document}

\maketitle

\begin{abstract}
Self-supervised learning (SSL) has recently received significant attention due to its ability to train high-performance encoders purely on unlabeled data---often scraped from the internet. This data can still be sensitive and empirical evidence suggests that SSL encoders memorize private information of their training data and can disclose them at inference time. Since existing theoretical definitions of memorization from supervised learning rely on labels, they do not transfer to SSL. To address this gap, we propose \name, a framework for defining memorization within SSL. Our definition compares the difference in alignment of representations for data points and their augmented views returned by both encoders that were trained on these data points and encoders that were not. Through comprehensive empirical analysis on diverse encoder architectures and datasets we highlight that even though SSL relies on large datasets and strong augmentations---both known in supervised learning as regularization techniques that reduce overfitting---still significant fractions of training data points experience high memorization. Through our empirical results, we show that this memorization is essential for encoders to achieve higher generalization performance on different downstream tasks.

\end{abstract}

\section{Introduction}

In recent years, self-supervised learning (SSL) has emerged as a new potent learning paradigm. 
SSL encoders can be trained without reliance on labeled data, which is often hard and expensive to obtain. 
Instead, SSL leverages the existence of large amounts of unlabeled data---often scraped from the internet---to obtain state-of-the-art performance in various domains, ranging from computer vision~\citep{mae,chen2020simple,simsiam,caron2021dino} to natural language processing~\citep{bert, gpt2, brown2021memorization}.

Empirical studies suggest that SSL encoders can disclose information about their training data at inference time~\citep{meehan2023ssl}.
An unintended revelation of private information is often associated to machine learning models' ability to memorize their training data~\citep{zhang2016understanding,arpit2017closer,chatterjee2018learning, carlini2019secret, carlini2021extracting, carlini2022privacy}.
Studies in supervised learning revealed that mainly mislabeled samples, outliers~\citep{bartlett2020benign,feldman2020does,feldman2020neural}, or data points that were seen towards the end of training~\citep{jagielski2022measuring} are memorized
and why memorization is crucial for the success of learning~\citep{feldman2020does, feldman2020neural,arpit2017closer, tirumala2022memorization}.
Additionally, it was found that in supervised learning memorization happens in the feature extractor (encoder) layers~\citep{feldman2020neural, maini2023can}.
Those are exactly the type of layers that SSL trains.
Yet, given that SSL differs significantly from supervised learning in terms of learning objective, data processing, and augmentation strength, it remains unclear whether the trends from supervised learning transfer to the self-supervised learning.

To date, a key limitation for studying memorization in SSL lies in the fact that the theoretical definitions from supervised learning~\citep{feldman2020does} cannot be applied since they rely on class labels which are not available in SSL. 
Existing empirical approaches to assess privacy leakage in SSL are equally unsuited to study general SSL memorization since they still assume the existence of labels~\citep{meehan2023ssl}.
Membership inference attacks~\citep{shokri2017membership} are highly related to memorization.
Yet, to date, membership inference in SSL has been solely used to study privacy risks by answering the question whether or not a particular training data point was used to train a given encoder~\citep{encoderMI}.
Given that memorization is not a binary concept, nor a property of a particular trained encoder, our work goes beyond prior considerations and uses memorization to study general properties and behavior of SSL \methods, such as their generalization capabilities.

Deriving a definition of memorization tailored to SSL and \textit{general over all \methods} comes with a severe challenge.
In supervised learning, all \methods directly optimize for the same objective (high confidence prediction on correct class labels) which creates a strong direct signal that can be measured to assess memorization~\citep{feldman2020does}.
In contrast, different SSL \methods solve different optimization tasks in their respective projection spaces.
Some \methods, for example, minimize the reconstruction loss on an added decoder~\citep{mae}, others minimize a contrastive or non-contrastive loss in an additional projection space~\citep{chen2020simple,simsiam}.
None of the \methods directly operates on the encoder's representations that are eventually used for downstream tasks, and hence of interest for a \method-independent general definition of memorization.
We address this challenge by identifying training data \textit{augmentations} and \textit{alignment}, \ie similarity in representations over different augmentations of the same training data point, as common elements over all SSL \methods.
To define memorization of a data point, we consider the difference in alignment of representations for its augmented views produced by encoders that were trained on this point and encoders that were not.

\begin{figure}[t]
\centering
\begin{subfigure}[t]{0.45\textwidth}
\includegraphics[width=\textwidth]{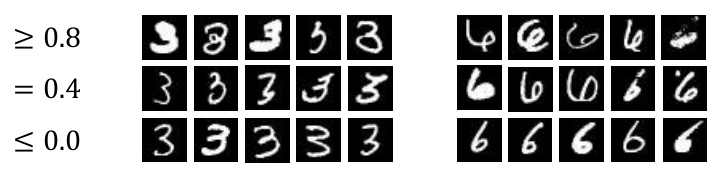}
\caption{MNIST: class \textit{3} and \textit{6}.}
\label{fig:mnist_mem}
\end{subfigure}
\begin{subfigure}[t]{0.45\textwidth}
\includegraphics[width=\textwidth]{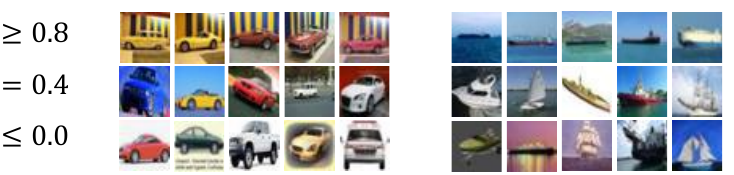}
\caption{CIFAR10: class \textit{automobile} and \textit{ship}.}
\label{fig:cifar_mem}
\end{subfigure}
\caption{\textbf{Examples of data with different levels of memorization.} Higher memorization scores indicate stronger memorization. We observe that outliers and atypical examples experience higher memorization than more standard samples. Results are obtained on a ViT-tiny, trained with MAE. 
}
\label{fig:memorized_samples}
\vspace{-1em}
\end{figure}

We empirically analyze memorization based on this definition over multiple datasets, encoder architectures, and SSL training \methods including contrastive and non-contrastive approaches.
Our results highlight that even though SSL relies on large datasets and strong augmentations which are known as regularization techniques against overfitting in supervised learning, a significant fraction of data points still experiences high memorization in SSL.
Additionally, while the training process of SSL is substantially different from supervised learning, and while no class labels exist that explicitly make data points "outliers", we still observe that atypical data points experience higher levels of memorization than typical ones, a result similar to the supervised setting~\citep{feldman2020neural}.
We demonstrate this effect visually in \Cref{fig:memorized_samples}.
Yet, we also find that while different SSL \methods and encoder architectures exhibit high memorization on a similar set of data points, the data points that are memorized in supervised learning differ substantially. %

Finally, we turn to the question: \textit{why do SSL models memorize?}
Our analysis reveals that, in a similar vein to supervised learning, also in SSL, memorization improves downstream generalization.
The key insight from our empirical evaluation, and the main difference to supervised learning, is that this holds over various downstream tasks, \ie the encoder memorizing data points from one distribution yields better downstream generalization for another distribution. 
We even observe this effect on non-classification downstream tasks, such as semantic segmentation.
This highlights that memorization improves SSL's general success on various downstream tasks.

In summary, we make the following contributions.%
\begin{itemize}
    \item We propose a formal definition of memorization for SSL encoders (\name) that is independent of the training \method, its concrete training loss, and that operates directly on the representations. 
    \item We empirically evaluate our definition in practice and find that over different architectures and training \methods in SSL, there is significant memorization, especially of atypical data points. While the points with the highest memorization scores align between different SSL training \methods, especially when they share the same architecture, they differ more substantially between SSL and supervised learning.
    \item We show that SSL memorization in the encoder increases the downstream generalization over different downstream data distributions and tasks.
\end{itemize}

\section{Background and Related Work}\label{sec:background}

\paragraph{SSL.} Self-Supervised Learning (SSL) trains encoder models to transform unlabeled inputs into useful representations which enable sample-efficient learning of multiple downstream tasks~\citep{ssl-learning2013}. Recently, many \methods were proposed for learning from large amount of unlabeled data in the vision domain~\citep{chen2020simple,simsiam,caron2021dino,bardes2022vicreg,mae}. Since our work is focused on providing a universal definition of memorization in SSL, we consider different approaches that rely on three distinct learning objectives. 
\textbf{Contrastive learning} was pioneered by work on SimCLR~\citep{chen2020simple} where one trains encoders such that augmented views of the same input, also called positive pairs, obtain representations close to each other while representations for dissimilar inputs (negative pairs) are repelled from each other. The key properties of contrastive learning with respect to representations are (1) \textit{alignment} (closeness) of the representations from positive pairs, and (2) sufficient \textit{class center separation} (divergence)~\citep{huang2021towards}.
The foundations for \textbf{non-contrastive learning}~\citep{pokle22aNonContrastive} %
were laid by SimSiam~\citep{simsiam} which showed that negative samples are unnecessary to avoid trivial solutions, such as encoder collapse, where the same representation is returned for each input. By training with a projection head applied to only one of its Siamese encoders and preventing the gradient
from propagating through the other encoder branch, SimSiam is able to generate representations with high alignment and class center separation. DINO~\citep{caron2021dino} further extended this strategy by minimizing the cross-entropy loss (on latent classes obtained through the training head)
instead of negative cosine similarity and decorrelating the two Siamese branches. 
Finally, \textbf{masked autoencoding}, such as canonical MAE~\citep{mae}, trains an asymmetric encoder-decoder architecture instead of two-branched encoders. MAE learns to reconstruct randomly masked
patches of an input image. By masking high portions (75\%) of inputs, this strategy encourages learning useful features and enables better scalability (faster pre-training and less memory consumption). The distinguishing factor between MAE and other SSL encoders is its reliance on the masking of inputs instead of strong dependence on other data augmentations, such as, random cropping or color jitter.

\paragraph{Membership Inference Attacks.} 
The standard approach for measuring how machine learning models leak private information about their training data is through membership inference~\citep{shokri2017membership}, where an adversary attempts to determine if a particular data point was used to train a given model. \textit{EncoderMI}~\citep{encoderMI} detects encoder membership by observing that alignment scores for training data points are higher than for points not used in training. 
While we leverage a similar concept to study memorization, we do not narrow our analysis down to quantifying privacy risks of a particular encoder.
Instead, we use the concept of memorization to study broader properties of SSL.
Above all, we establish how memorization influences the downstream generalization.

\paragraph{Memorization.} 
As an important property of learning algorithms and neural networks, memorization has been actively studied in supervised learning~\citep{zhang2016understanding,arpit2017closer,chatterjee2018learning, carlini2019secret, carlini2021extracting, carlini2022privacy}. The fundamental idea to quantify memorization relies on the impact that a single training data point has on the predictions of the resulting models with a larger impact (or increased "hardness" of learning the data point~\citep{arpit2017closer,sadrtdinov2021memorization}) indicating a higher level of memorization~\citep{feldman2020does}. 
\citet{brown2021memorization} demonstrate that models must encode information contained in a large number of training examples in order to achieve high accuracy. To arrive at this conclusion, they study supervised learning (next-symbol prediction and classification) whereas we study SSL.
Additionally, while memorization has been shown to be important for generalization related properties in supervised learning~\citep{feldman2020does, feldman2020neural} or the supervised downstream classifiers within transfer learning~\citep{bansal2020self}, this aspect has not been studied for SSL. 
The only work which considers memorization in SSL proposes the concept of Déjà Vu memorization~\citep{meehan2023ssl} 
and quantifies how much SSL encoders associate specific views (for example of background crops) with the foreground objects
in training images.
To assess whether an encoder exhibits Déjà Vu memorization for a given training data point, the framework obtains the representation for a crop of the data point, and compares the representation with representations of labeled data points from a public dataset that has the same distribution as the encoder's training data.
If the labels within the $k$ nearest neighbors of the crop in the representation space are highly consistent, the data point is marked as memorized.
Since the core property of SSL is training without labels, the biggest limitation of the Déjà Vu memorization is the assumption about access to labeled data from the same distribution as the training set of the encoder. Additionally, Déjà Vu memorization relies on a particular SSL augmentation, namely cropping. 
However, not all SSL \methods use cropping.
Finally, SSL encoders are applied to a myriad of downstream tasks other than classification (\eg multi-label classification, segmentation, depth detection) where the concept of a single class per input does not exist---rendering this definition of memorization narrow.
Our definition of memorization is based on representations, which are output by all SSL encoders, thus it is independent of particular augmentations, downstream tasks, or the availability of auxiliary information, such as class labels.

\section{Towards Formalizing Memorization}

Given the absence of labels in SSL, directly applying definitions of general memorization from supervised learning, such as~\citep{feldman2020does}, is inadequate.
Therefore, we aim at deriving a new definition of memorization suitable for SSL and independent of a specific learning framework (\eg contrastive learning~\citep{chen2020simple} or masked autoencoding~\citep{mae}).

Our definition leverages a common element over all SSL frameworks, namely data \textit{augmentations} and their \textit{alignment}.
Augmentations refer to different views of a data point, generated, for instance, through cropping, masking, or noise addition.
Informally speaking, when learning with SSL, the objective is to obtain an encoder that achieves a \textit{low alignment loss} on different augmented views of a training data point, \ie an encoder that returns very similar representations on the training data point and its augmentations.
Note that different SSL \methods, in addition to alignment, optimize implicitly~\citep{mae,zhang2022mask} or explicitly~\citep{chen2020simple} for other objectives, such as uniformity. 
Yet, given that these are not properties of an individual data point but rather of the overall representation space, influenced by multiple data points, we do not include them into the definition of our per-data point memorization.\footnote{We provide a formal discussion on the fact that alignment is part of all SSL \methods, and what other optimization objectives different \methods exhibit in~\Cref{app:all_ssl_has_alignment}} 
Instead, we use representation alignment between different augmented views of a data point to detect memorization.
More concretely, we consider a data point as 
having a high level of memorization
by an encoder $f$ if its alignment is significantly higher on $f$ than on encoder $g$ that was not trained with the considered data point.
In the following, we will formalize this intuition and propose our novel definition for memorization in SSL (\name).

\subsection{Preliminaries and Problem Setup}

We present a formal model of SSL learning \methods as well as concepts that are relevant to defining memorization. In doing so, we leverage several of the main ideas proposed by recent theoretical work on SSL \citep{parulekar2023infonce, huang2021towards, wang2022chaos}. Let $f:\mathbb{R}^n \to \mathbb{R}^d$ be an encoder trained using an SSL learning algorithm $\mathcal{A}$ on an unlabeled training dataset $S = \{x_i\}_{i=1}^{m}$. 
We assume randomness in the training algorithm, \eg random weight initializations, such that the final trained encoder $f$ is from a class of possible encoders $\mathcal{F}$.
For each data point $x$, we define an augmentation set $\Aug(x) = \{a(x) | a \in \Aug\}$ where $a$ is an augmentation, \ie a transformation from $\mathbb{R}^n \to \mathbb{R}^n$, and $\Aug$ is the set of all possible augmentations. $f(x)$ denotes an output representation of encoder $f$ for the data point $x$.
We measure the distance between representations of two different augmentations $x'$ and $x''$ of $x$ with a metric $d$, \eg the $\ell_2$ distance, and define alignment loss over the representations as 
\begin{equation}
\lalign(f, x) = \alignexp{f}{x}\text{.}
\label{eq:alignment_loss}
\end{equation}

A standard downstream task for a trained SSL encoder $f$ is classification, where a linear layer $G_f$ is trained to map from the representation space produced by $f$ 
to labels (this form of evaluation is also referred to as \textit{linear probing}). 
With respect to classification, the generalization error of encoder $f$ is defined in terms of the error that classifier $G_f$ achieves. %
The main connection between a low alignment loss of $f$ over the augmentation set of each training data point and the error of $G_f$ on downstream tasks is based on the overlap of augmentation sets. Considering two data points $x_1, x_2$ from the same downstream class, it is likely that they will have overlapping augmentation sets (\ie $\exists \ a_1, a_2 \in \Aug$ s.t. $a_1(x_1) = a_2(x_2)$) \citep{huang2021towards}.
When the alignment loss decreases, the difference between $\Aug(x_1)$ and $\Aug(x_2)$ decreases. This will lead to $d(f(x_1), f(x_2))$ also decreasing by the triangle inequality.\footnote{We assume that after training $\lalign(f,x_1), \lalign(f,x_2) \leq c$. When considering the region $\Aug(x_1) \cap \Aug(x_2)$,%
we can find a point $y$ in this region so that both of $\dmetric{f(x_1)}{f(y)}$ and $\dmetric{f(x_2)}{f(y)}$ are $\leq c$. Now the triangle inequality can be used to obtain $\dmetric{f(x_1)}{f(x_2)} \leq 2c$.}
Hence $x_1, x_2$ will obtain similar representations, facilitating $G_f$ in assigning them the same class label~\citep{huang2021towards}. %

\subsection{Alignment and Memorization in SSL}
\label{sec:memorization}

Our definition for memorization in SSL follows the \textit{leave-one-out} definition of memorization from supervised learning \citep{feldman2020does}
but instead of focusing on the model behavior w.r.t. ground truth labels (which do not exist in SSL), it is based on the alignment loss~(\ref{eq:alignment_loss}) of training data points.
Consider a single data point $x$ from  dataset $S$ and encoders $f \in \mathcal{F}, g \in \mathcal{G}$ trained with SSL algorithm $\mathcal{A}$ on $S, S \setminus x$ (dataset $S$ with $x$ removed), respectively. 
We then define the memorization score $m$ with \name on $x$ as
\begin{equation}\label{eq:memdef}
    \begin{split}
    m(x) =  \underset{g \sim \mathcal{A}(S \setminus x)}{\mathbb{E}} \ \alignexp{g}{x}
    - \underset{f \sim \mathcal{A} (S)}{\mathbb{E}} \ \alignexp{f}{x}  \text{.} 
    \end{split}
\end{equation}
Here, we take the expectation not only over the set of augmentations of $x$, but also over two different function classes consisting of all possible encoders $f, g$ which can result from the SSL training algorithm. Specifically, these classes are $\mathcal{F} = \mathcal{A}(S)$ and $\mathcal{G} = \mathcal{A}(S \setminus x)$.
Intuitively, our definition quantifies how the alignment of representations in $\Aug(x)$ varies between encoders $f$ and $g$. 
Following the intuition from~\citet{feldman2020neural}, our memorization score is higher for a data point $x$ if the alignment changes significantly between $f$ and $g$, \ie based on whether $x$ was used for training or not.  
Importantly, alignment and memorization report different concepts: the former is a direct property of a given encoder whereas the latter is a result of the relative comparison between different families of encoders.
In particular, low alignment loss does not necessarily correspond to high memorization, which we show in \Cref{fig:mem_vs_alignment} (the bottom left corner). 
To understand why this holds, consider a candidate data point $x$ included in the training set of encoders $f\in \mathcal{F}$ but not in the one of encoders $g\in \mathcal{G}$. 
$f$ can have a low alignment loss but also low memorization on $x$.
This happens if $g$ has an equally low alignment loss on $x$ as $f$, for example, because $x$ is easy to learn or similar to other examples in $g$'s training set.
Note that we subtract the term for encoders $f \in \mathcal{F}$ from the term for $g\in \mathcal{G}$ to obtain a positive memorization score with the expectation that encoders $g$ which are trained without $x$ usually have a higher alignment loss on $x$ than encoders $f$. 
We provide further theoretical analysis of alignment and memorization for SSL in~\Cref{app:mem-def}.

\section{Experimental Evaluation}

To experimentally approximate the memorization score, \name, from \Cref{eq:memdef}, we consider averaging over five random augmentations. We divide the training set $S$ into three disjoint partitions. For example, in CIFAR10, we use 80\% of the train data, \ie 40000 samples as shared training data $S_S$ between encoders $f$ and $g$. The next 10\% of samples, \ie 5000 are used as \canaries $S_C$ to evaluate memorization. We add those to the training data of $f$ only, and the remaining 10\%, which is another 5000 samples, are used as an independent set $S_I$, on which we do not train $f$ but only $g$. 
We also use additional extra set $S_E$ with 5000 samples from the test set, which are data points not used for training of either $f$ or $g$.
Thus encoder $f$ is trained on $S_S \cup S_C$, whereas $g$ is trained on $S_S \cup S_I$. We measure the memorization on the \canaries $S_C$ and report their average memorization scores as an aggregate metric.
To provide more fine-grained qualitative insights into the memorized samples, we additionally report overviews on the per-data point distributions and zoom into the points that experience the highest memorization.
We use 50000 data points as training samples for CIFAR10, SVHN, and STL10 and 100000 for ImageNet. We set the batch size to 1024 for all our experiments and train for 600 epochs on CIFAR10, SVHN, and STL10, and for 300 epochs on ImageNet.
As a distance metric to measure representation alignment, we use the $\ell_2$ distance. 
To be able to compare memorization between different SSL \methods, we normalize the resulting memorization scores to a range between -1 and 1.
A memorization score of 0 denotes no memorization, +1 is the strongest memorization effect on encoder $f$, and -1 strongest memorization on $g$. 
We repeat all experiments with three independent seeds and report the average \name memorization and standard deviation.
Our full experimental setup is depicted in \Cref{app:experimental-setup}.

\subsection{Memorization over Different Architectures, SSL \Methods and Datasets}
\label{sec:mem_arch_data}

\addtolength{\tabcolsep}{-0pt} 
\begin{table}[t]
    \caption{\textbf{Higher memorization for more performant encoders.}
    We present the average memorization score over the 5000 \canaries $S_C$ (\name) and the linear probing accuracy (one layer for classification trained on top of the representations for the respective datasets, Acc.) over various datasets, encoder architectures, and SSL training \methods.
    SimCLR and DINO are trained using ResNet50. MAE and DINO are also trained with the ViT architecture. We use ViT-Tiny for all datasets, apart from ImageNet, for which we use ViT-base.}
    \centering
    \small
    \scalebox{0.6}{
   \begin{tabular}{cccccccccc}
    \toprule
     & & \multicolumn{2}{c}{CIFAR10} & \multicolumn{2}{c}{SVHN} & \multicolumn{2}{c}{STL10} & \multicolumn{2}{c}{ImageNet} \\
    \Method&Model & \name & Acc. (\%) & \name & Acc. (\%) & \name & Acc. (\%) & \name & Acc. (\%) \\
    \midrule
    MAE &VIT  &  0.307 $\pm$ 0.013 & 67.40\% $\pm$ 1.10\% & 0.311 $\pm$ 0.009 & 68.52\% $\pm$ 1.02\% &  0.284 $\pm$ 0.011 &62.11\% $\pm$ 0.95\% & 0.271 $\pm$ 0.004 & 60.43\% $\pm$ 1.18\%\\ 
     DINO & VIT   & 0.334 $\pm$ 0.010 & 76.12\% $\pm$ 0.79\% & 0.356 $\pm$ 0.011 & 82.26\% $\pm$ 1.48\% & 0.321 $\pm$ 0.008 & 73.88\% $\pm$ 0.85\%& 0.309 $\pm$ 0.015 & 68.21\% $\pm$ 1.55\%\\
     
     DINO & ResNet50    & 0.327 $\pm$ 0.009 & 75.39\% $\pm$ 1.15\% & 0.350 $\pm$ 0.014 & 80.69\% $\pm$ 0.94\% & 0.319 $\pm$ 0.007 & 73.02\% $\pm$ 1.92\% & 0.311 $\pm$ 0.012 & 68.44\% $\pm$ 0.61\% \\
     SimCLR &ResNet50 & 0.339 $\pm$ 0.011  & 77.12\% $\pm$ 1.42\%  & 0.357 $\pm$ 0.008 & 82.30\% $\pm$ 1.31\% & 0.321 $\pm$ 0.009 & 74.22\% $\pm$ 1.66\% & 0.301 $\pm$ 0.011 & 66.12\% $\pm$ 1.23\%\\ 
      \bottomrule
    \end{tabular}}
    \vspace{-10pt}
    \label{tab:memorization_overview}
\end{table}
\addtolength{\tabcolsep}{0pt}

\begin{figure}[t]
    \centering
\begin{subfigure}[b]{0.31\textwidth}
    \includegraphics[width=1.0\textwidth,trim={0 1.15cm 0 0cm},clip]{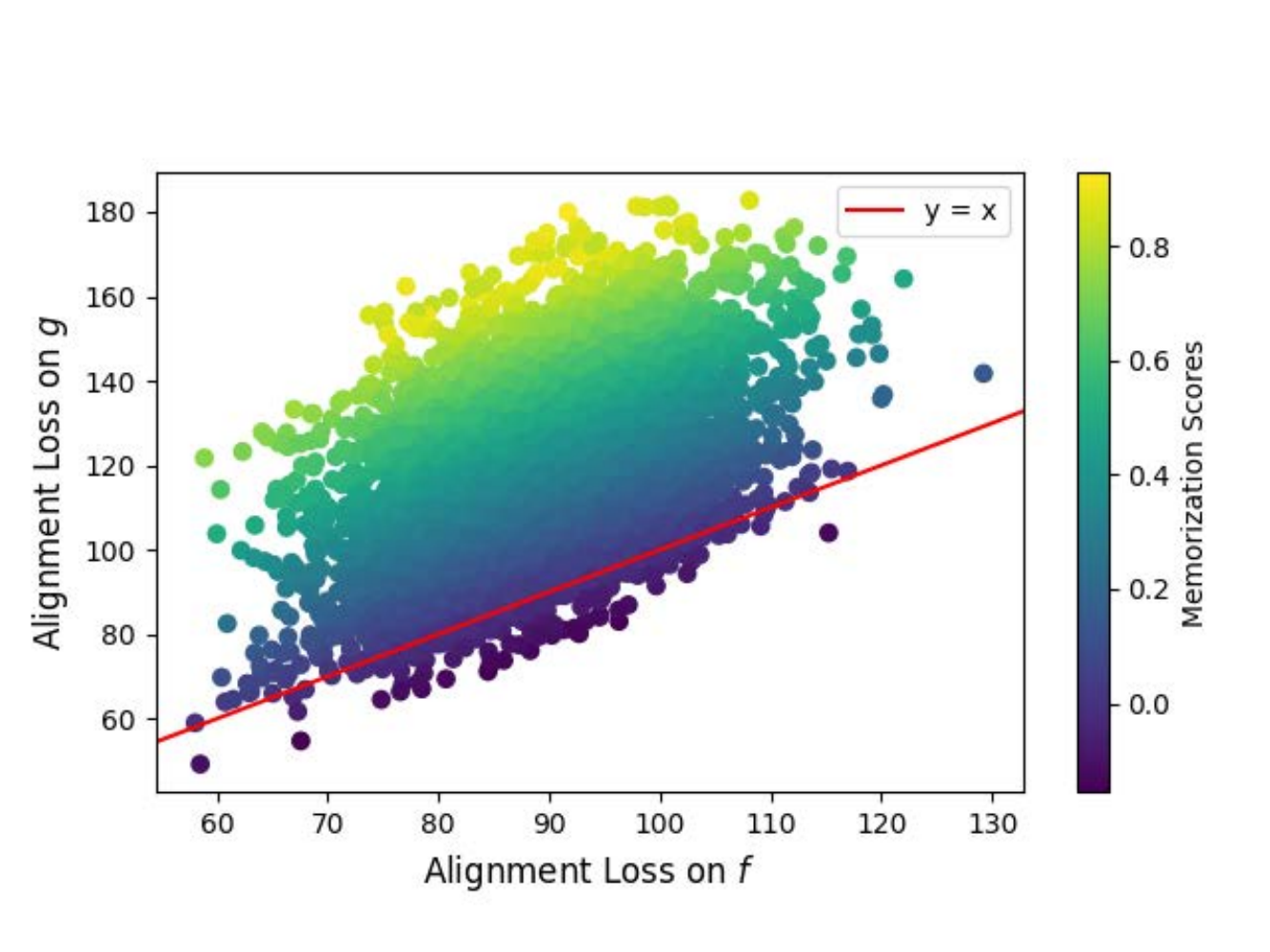}
    \caption{
    Model Alignment Loss computed according to~(\ref{eq:alignment_loss}) vs. Memorization.
    }
    \label{fig:mem_vs_alignment}
\end{subfigure}
\hspace{0.015\textwidth}
\begin{subfigure}[b]{0.31\textwidth}
\includegraphics[width=1.0\textwidth,trim={0 1.15cm 0 0cm},clip]{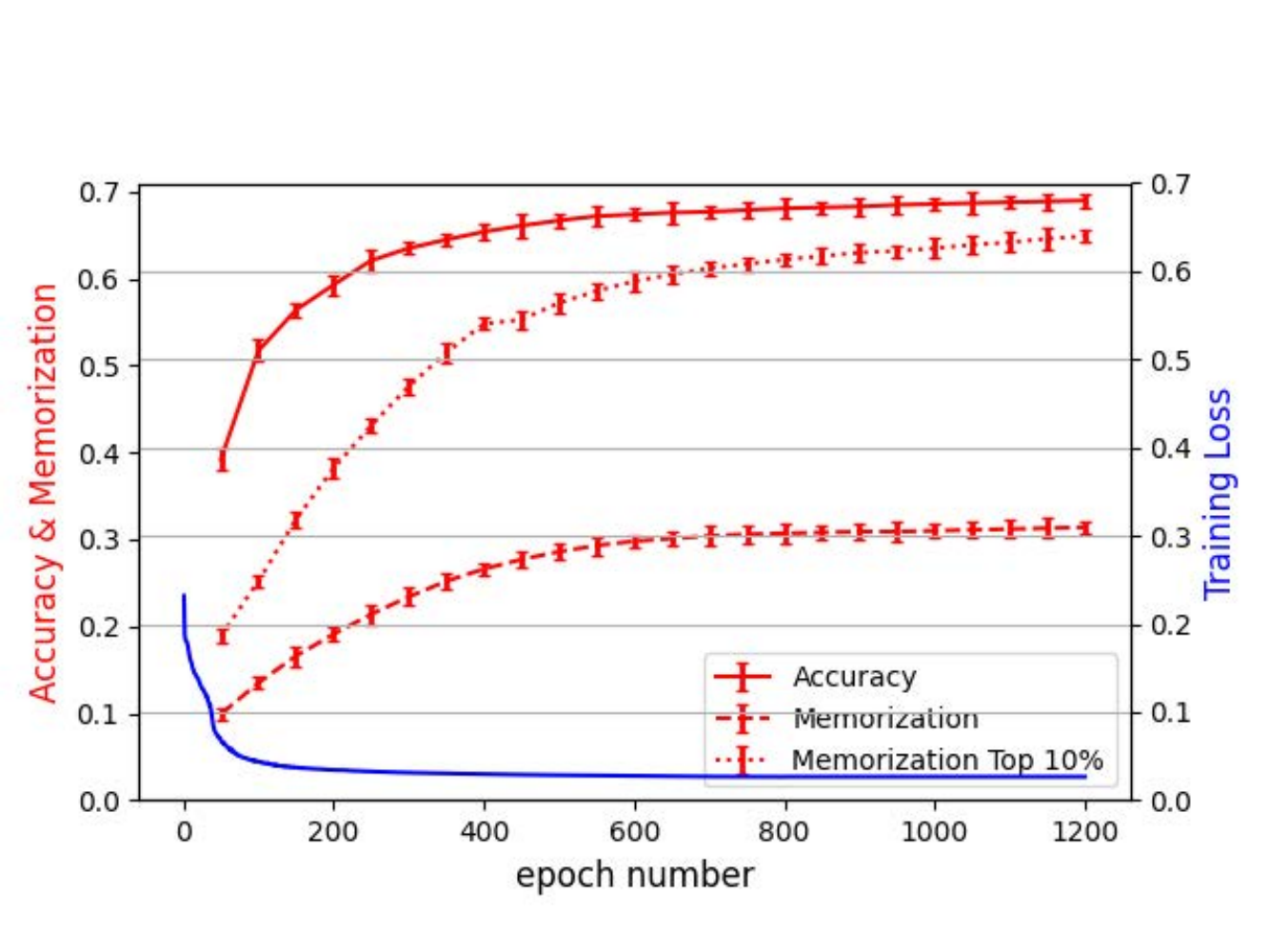}
    \caption{
    Connection between training loss, downstream accuracy, and memorization scores.
    }
    \label{fig:epoch_mem}
\end{subfigure}
\hspace{0.015\textwidth}
\begin{subfigure}[b]{0.31\textwidth}
\includegraphics[width=1.0\textwidth,trim={0 1.15cm 0 0cm},clip]{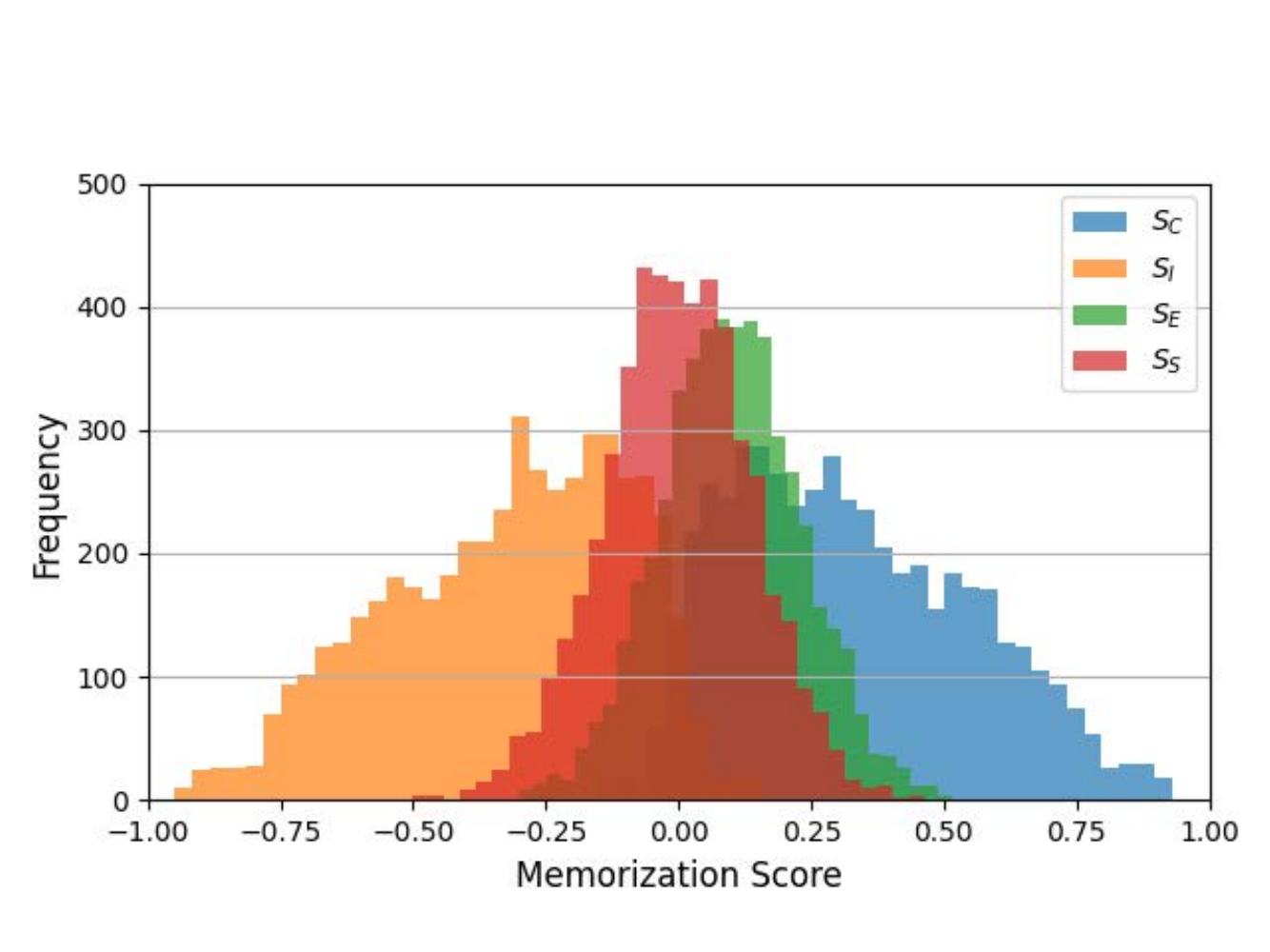}
\caption{
Comparison between memorization scores for data subsets $S_C$, $S_I$, $S_E$, and $S_S$.}
\label{fig:calibration}
\end{subfigure}
    \caption{
    \textbf{Insights into our memorization score.}
    We train an MAE with VIT-tiny on CIFAR10.
    (a) We plot the alignment loss, computed with the $\ell_2$ distance, of the \canaries (with respect to their augmentation) on encoder $f$ and encoder $g$. The color coding indicates the memorization score with higher scores indicating higher memorization.
    The lowest alignment loss on $f$ does not yield the highest memorization score, and high memorization can occur at a wide range of alignment loss values for $f$.
    (b) Training loss, downstream accuracy, and memorization over the course of training highlight that memorization is not just an effect of increasing/decreasing accuracy: while loss and accuracy stagnate after a few hundred epochs, memorization increases. 
    (c) We report the memorization scores for 5000 data points from each subset $S_C$, $S_I$, $S_E$, and $S_S$. 
    The encoders exhibit memorization indicated by significantly higher (lower) scores for $S_C$ ($S_I$) compared to $S_S$ or $S_E$.
}
\end{figure}

We assess memorization over different encoder architectures, SSL training \methods, and datasets and report the results in \Cref{tab:memorization_overview}. 
Our analysis of memorization shows a correlation between downstream task accuracy and the average memorization score. This trend holds for SimCLR on CIFAR10, SVHN, and STL10, and for DINO on ImageNet, where these methods achieve the highest accuracy and the biggest average \name memorization score, while MAE exhibits the lowest scores on both measures across all four considered datasets.
We present additional results on the impact of type and strength of augmentations and the training \method on memorization in \Cref{app:augmentations}.
Overall, greater average \name memorization appears to be associated with superior downstream performance. 
Yet, as we illustrate in \Cref{fig:epoch_mem} alignment and accuracy are distinct metrics.
Training loss and accuracy plateau after a few hundred epochs, but memorization continues increasing with longer training.
This holds both over the entire \canary-set, and especially for the 10\% most memorized samples---highlighting that more epochs lead to higher memorization. This insight decouples the the measures of accuracy and memorization in terms of training dynamics.

\subsection{Insights on the Memorization Score}

The results shown in \Cref{fig:calibration} demonstrate that our memorization score behaves as expected.
Memorization significantly increase above 0 for the \canary samples $S_C$ used during training of encoder $f$ for which we want to capture memorization, significantly decrease below 0 for the independent samples $S_I$ used for training of encoder $g$, while remaining around 0 for the shared samples $S_S$ or extra data points $S_E$ not seen during training. 
We formally verify that data points from $S_C$ ($S_I$) have statistically significantly higher (lower) \name memorization scores $m$ than those from $S_S$ and $S_E$. 
\newcommand\xdensity{0.4} %
\newcommand\vskipdensity{-0.15in}
\begin{wraptable}{r}{4.5cm}
\captionof{table}{
\textbf{Results of statistical t-tests.}
\label{tab:t-test}
}
\vskip \vskipdensity
\centering
\scriptsize
\scalebox{0.75}{ \begin{tabular}{ccc}\toprule
    \textbf{Null Hypothesis} & p-value & effect size \\ 
    \cmidrule{1-3}
    $\mathcal{H}_0 := m(S_C) \le m(S_S)$ & 0 & 86.82 \\
    $\mathcal{H}_0 := m(S_C) \le m(S_E)$ & 0 & 60.44 \\
    $\mathcal{H}_0 := m(S_S) \le m(S_I)$ & 0 & 85.48 \\
    $\mathcal{H}_0 := m(S_E) \le m(S_I)$ & 0 & 113.56 \\
    \bottomrule
\end{tabular}}
\vskip \vskipdensity
\end{wraptable}
The mean memorization scores are as follows for each of the subsets: 0.30723 for $S_C$, -0.00136 for $S_S$, 0.09958 for $S_E$, and -0.31182 for $S_I$. 
Using a t-test with 5000 memorization scores per each data subset, we test the null hypothesis $\mathcal{H}_0 := m(S_C) \leq m(S_S)$. Rejecting this hypothesis (p-value $<$ 0.01) indicates the memorization $m$ is significantly higher for points in $S_C$ than for points in $S_S$. 
Our results reject $\mathcal{H}_0$ with a small p-value near 0 and a large effect size of 86.82, which indicates that observed difference is not only statistically significant but also meaningful. 
We show the results of the statistical tests for all the considered data subsets in \Cref{tab:t-test}.
They support the claim that $S_C$ ($S_I$) is substantially more (less) memorized than $S_S$ and $S_E$. 
We also observe that memorization scores for both $S_E$ and $S_S$ have their peaks close to 0. 
There is a difference in the mean scores between $S_E$ than $S_S$ since data points from $S_E$ are not seen during training of neither $f$ nor $g$ while data points from $S_S$ are used for the training of both $f$ and $g$.
We present further analysis in \Cref{app:mem-scores}.%

\subsection{Memorized Data Points}
Additionally, we analyze what types of data points are memorized. 
In \Cref{fig:memorized_samples}, we already showed visually that, similar to supervised learning, atypical examples experience a higher memorization in SSL than standard data points.
Additionally, we show in \Cref{fig:memorized_samples_plot} and \Cref{tab:Tau_test} in \Cref{app:memorized_samples} that SSL and supervised learning differ notably in the data points that they assign the highest memorization scores to while the SSL setups memorize in a more similar way.
Especially, the SSL setups that share the same training \method or the same encoder architecture are most consistent.

\subsection{Memorization in SSL is Required for Downstream Generalization}
\label{sec:gen_needs_mem}

\begin{figure}[t]
\centering
\begin{subfigure}[t]{0.32\textwidth}
\includegraphics[width=\textwidth]{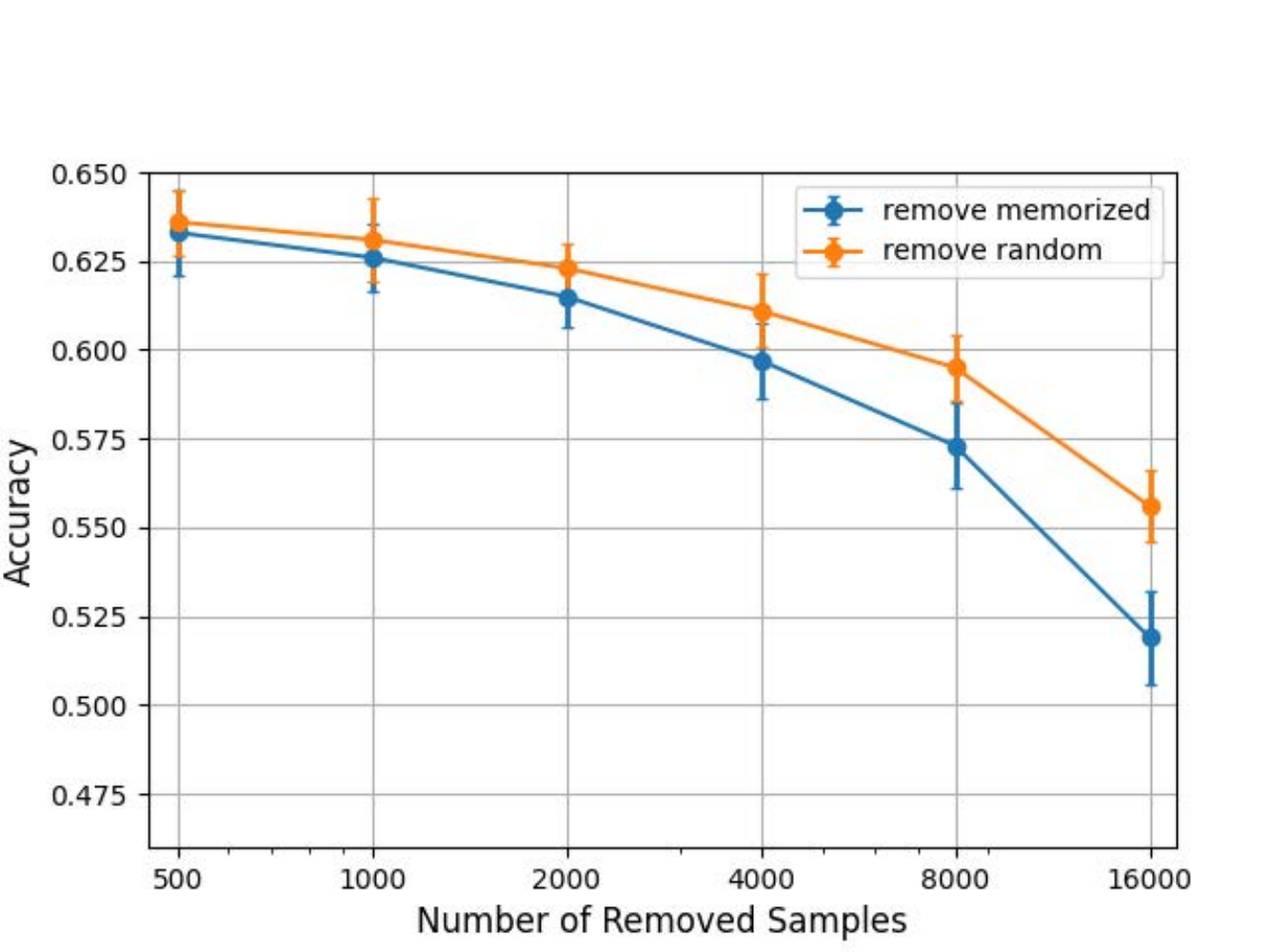}
\caption{On CIFAR10}
\label{fig:mem_equals_gen_cifar10}
\end{subfigure}
\begin{subfigure}[t]{0.32\textwidth}
\includegraphics[width=\textwidth]{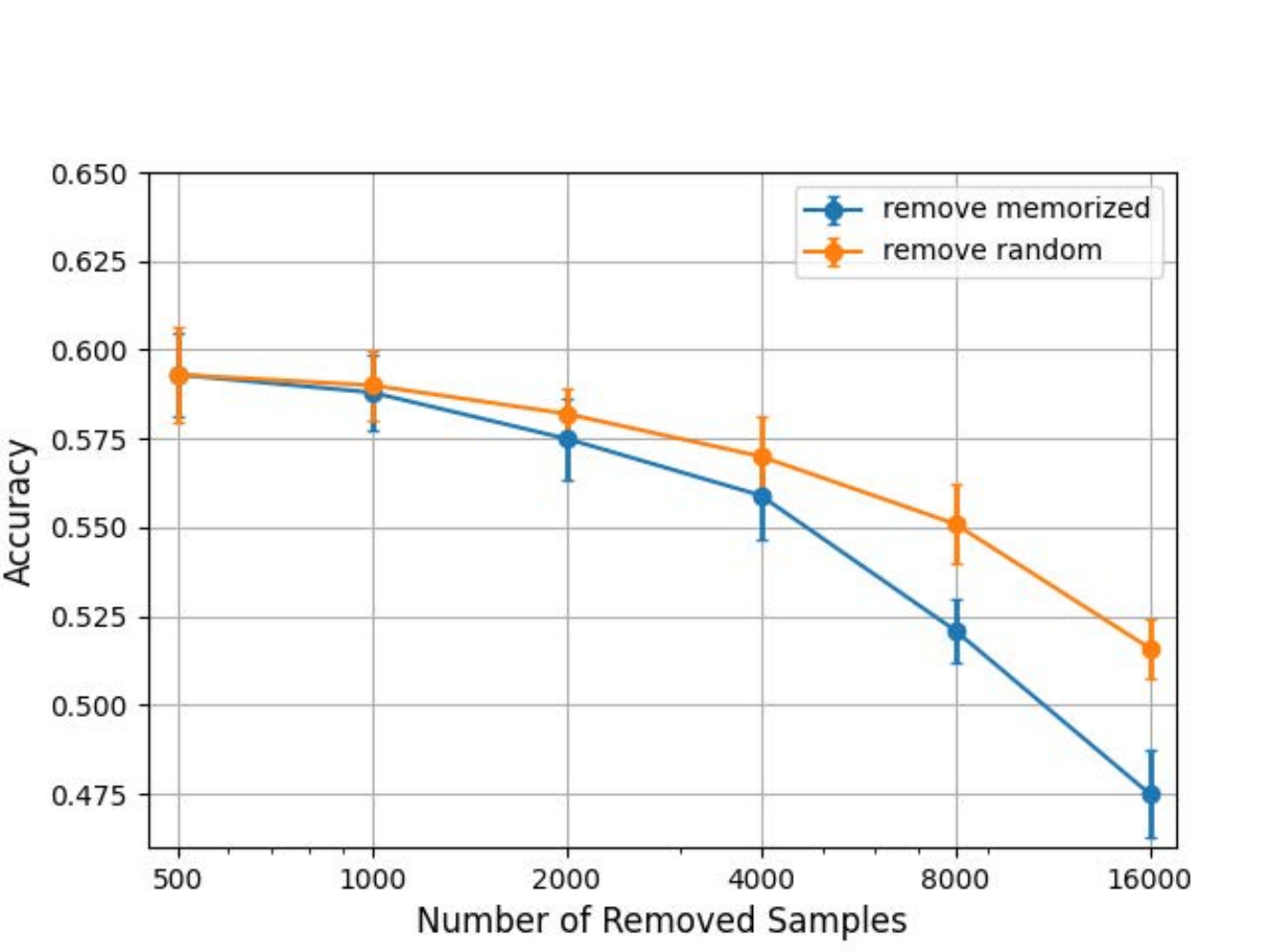}
\caption{On CIFAR100}
\label{fig:mem_equals_gen_cifar100}
\end{subfigure}
\begin{subfigure}[t]{0.32\textwidth}
\includegraphics[width=\textwidth]{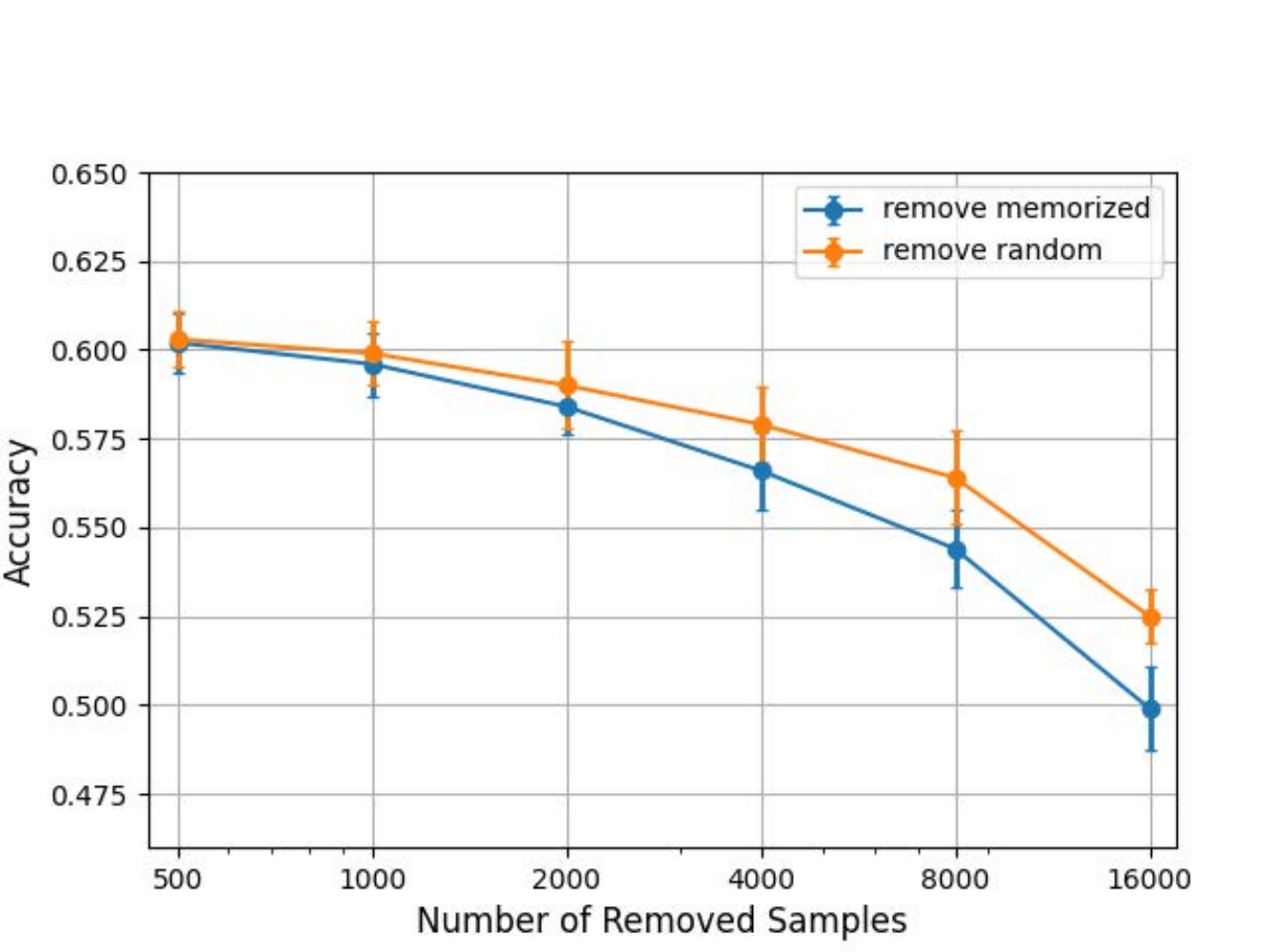}
\caption{On STL10}
\label{fig:mem_equals_gen_stl10}
\end{subfigure}
\caption{\textbf{The influence of memorization on downstream generalization (CIFAR10).}
We train an MAE model based on the VIT-tiny architecture on CIFAR10 and remove [500, 1k, 2k, 4k, 8k, 16k] most memorized vs. random data points from the encoder's training data. We measure downstream accuracy through linear probing on CIFAR10, CIFAR100, and STL10. The removal of memorized data points harms accuracy over all downstream tasks more than the removal of random data points.
}
\label{fig:mem_equals_gen}
\end{figure}

\addtolength{\tabcolsep}{-1.8 pt}
\begin{table*}[t]
\caption{\textbf{Evaluating the effect of memorization on a semantic segmentation downstream task.} 
\label{tab:segmentation}
}
    \centering
    \scriptsize
\begin{tabular}{cccccccc}
\toprule
          & Without removing  & \multicolumn{2}{c}{Removing 10000}   &  & \multicolumn{2}{c}{Removing 20000}   \\
                      &  &  Memorized &  Random &  &  Memorized &  Random \\ \midrule

mIoU      &      45.4             &  44.8              &  45.1               &  & 43.8                   &      44.4           \\
Acc. (\%)   &   69.89\% $\pm$ 0.84\%               &    68.33\% $\pm$ 0.92\%                &  68.91\% $\pm$ 0.77\%               &  &     66.51\% $\pm$ 1.03\%              &     67.58\%  $\pm$ 0.82\%           
\\ \bottomrule 
\end{tabular}
\vspace{-10pt}
\end{table*}

\addtolength{\tabcolsep}{1.8 pt}

\paragraph{Classification.}
We empirically analyze how memorization impacts downstream generalization to classification tasks by removing the most memorized data points from the training data of an encoder and assessing its linear probing accuracy on downstream tasks.
More concretely, we train $f$ and $g$ encoders with MAE using the ViT-tiny architecture on disjoint 25k data points from the CIFAR10 training dataset. 
Then, we measure the memorization scores over encoder $f$ and remove the [500, 1k, 2k, 4k, 8k, 16k] data points with the highest memorization scores from training. We do the same for randomly chosen [500, 1k, 2k, 4k, 8k, 16k] data points from encoder $f$ and compare downstream accuracy on multiple downstream tasks through linear probing on both these setups.
Our results in \Cref{fig:mem_equals_gen}  highlight that removing the memorized data points harms downstream accuracy stronger than removing random data points.
This does not only hold when the SSL encoder was trained with the same dataset as the downstream task 
but also when the downstream task comes from a different distribution (STL10) or has a different number of classes (CIFAR100).
In \Cref{app:mem_gen} and \Cref{app:more_datasets}, we show that this trend holds over different training and downstream datasets.

\paragraph{Semantic Segmentation.}
In a similar evaluation setup, for semantic segmentation downstream tasks, we pre-train a ViT-base with MAE on ImageNet, evaluate memorization, and remove the top [10k, 20k] memorized vs. random data points from the encoder's pre-training data. 
We end-to-end fine-tune the resulting encoders with UperNet~\citep{xiao2018unified} on the ADE20K dataset.
We measure downstream accuracy on ImageNet for the fine-tuned encoder through linear probing and the semantic segmentation performance with the mean Intersection of Union (mIoU).
Removing memorized samples from pre-training harms downstream performance on the semantic segmentation more than removing random samples, even after an independent end-to-end fine-tuning. In \Cref{app:mem_gen}, we show similar results for the downstream task of \textit{depth estimation}.

The observations on the interplay between memorization in SSL and its impact on the performance on diverse downstream tasks is a core result of this work, highlighting the importance of memorization for generalization of encoders, beyond the encoders own training distribution. 
To further validate the result, we investigate the effect of limiting alignment during the encoder training on both 
memorization and downstream accuracy.
With an alignment limited through regularization, the difference between encoders $f$ and $g$ on data points that were in the training set of $f$ but not of $g$ should decrease, which would result in a decreased memorization score.
To implement this intuition, we extend the loss function during training with an additional term as:
\begin{equation}
    \label{eq:loss_function}
    \mathcal{L}_{\text{total}}(f, x) = (1-\lambda)\mathcal{L}_{SSL}(f,x) - \lambda \alignexp{f}{x}
\end{equation}

The additional term $\alignexp{f}{x}$
directly penalizes representations of a data point and its augmentation set for being too close. 
The parameter $\lambda$ quantifies regularization strength with smaller values representing a weaker regularization.
Note that this regularization term does not directly invert the effect of SSL training which does not optimize directly on the representation space.
\newcommand\xintu{0.4}
\newcommand\vskipintu{-0.4in}
\begin{wrapfigure}{r}{0.3\textwidth}
\vskip \vskipintu
\begin{center}
\centerline{\scalebox{0.75}{\includegraphics[width=\xintu\columnwidth,trim={0 1.5cm 0 1.5cm}]{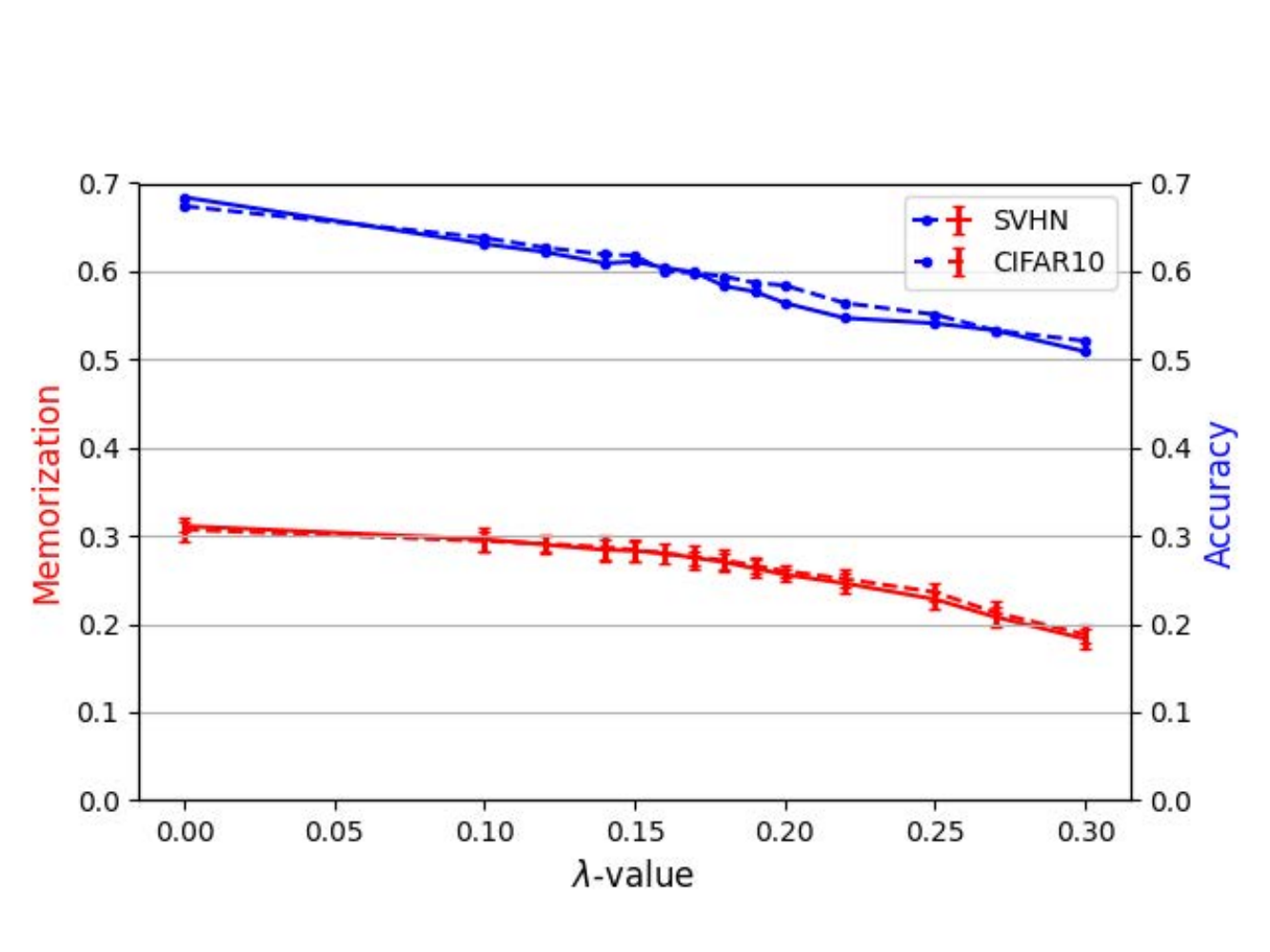}}}
\caption{
\label{fig:lambda_mem_acc}
\textbf{Limiting memorization harms downstream accuracy.} 
}
\end{center}
\vskip \vskipintu
\end{wrapfigure}
For example, MAE training loss is calculated on the decoder output space for the reconstructed samples. Other SSL \methods, such as SimSiam and SimCLR, map representations to the output space of the projection head where the loss is applied. 
In contrast, our regularization term operates directly on the representations themselves, in order to ensure an explicit control of the alignment.

We evaluate the effect of the additional loss term in \Cref{fig:lambda_mem_acc} for a ViT-tiny model, trained with MAE on CIFAR10 (solid lines) and SVHN (dashed lines) under different values for $\lambda$.
In the implementation, we instantiate $d$ with the $\ell_2$ distance and take the expectations over two random augmentations of the original data point.
We calculate both loss terms over a whole mini-batch, not a single data point, with a mini-batch size of 256.
Our results demonstrate that increasing regularization strength (higher $\lambda$) reduces model memorization. Concurrently, downstream accuracy from linear probing also decreases. This aligns with previous work showing that better alignment enables better generalization on downstream tasks~\citep{huang2021towards}. We expand upon these analyses by highlighting that limiting memorization capabilities negatively impacts encoder performance.

\subsection{Comparison to Prior Work}

The Déjà Vu memorization and our memorization score capture different phenomena and measure memorization in distinct ways. Déjà Vu memorization reports the fraction of data points classified as memorized based on label consistency with nearby points from the labeled dataset. Our method measures per-point memorization scores and reports the average score over the \canaries. 
Despite the divergent methodologies underpinning each memorization score, we nonetheless endeavor to analyze whether the two scores show similar trends.
We report our results in \Cref{fig:prior_work}. 
We observe that both memorization scores are higher on CIFAR10 than on ImageNet. 
We reason that the memorization is easier on CIFAR10 due to lower-dimensionality of CIFAR10 than ImageNet, smaller number of training data points, and using encoders with the same number of parameters for both datasets.
We observe a key divergence between the two memorization scores on MAE encoders. Specifically, Déjà Vu produces much higher memorization scores for MAE compared to other SSL \methods. In contrast, our \name memorization score yields lower scores for MAE than for other SSL \methods.
We hypothesize that this is due to MAE's training approach which heavily masks input patches, and thereby creates a strong correlation between some background fragments and a foreground object which can be exploited by Déjà Vu. The other SSL \methods rely on additional or different augmentations that cannot be so effectively leveraged by Déjà Vu.
Our analyses indicate that the specific augmentations employed do not show a statistically significant effect on our \name.

\newcommand\tabwc{0.0}
\addtolength{\tabcolsep}{-\tabwc pt}
\begin{table}[t]
    
          \caption{\textbf{Comparing our average \name memorization score with Déjà Vu memorization.} We train different model types and measure the memorization with our framework (\name) and the Déjà Vu memorization (Déjà Vu Mem.) \citep{meehan2023ssl}.}        
          \label{fig:prior_work}
    \centering
    \scriptsize
    \begin{tabular}{cccccccc}
    \toprule
     & \multicolumn{3}{c}{CIFAR10} && \multicolumn{3}{c}{ImageNet} \\
     \cmidrule{2-4} \cmidrule{6-8}
    Model & \name & Deja Vu Mem. & Acc (\%) && \name & Deja Vu Mem. & Acc (\%) \\
    \midrule
     MAE  &    0.307 $\pm$ 0.013 & 27.36\%$ \pm$ 1.50\% & 67.40\% $\pm$ 1.10\% && 0.271 $\pm$ 0.004 & 21.30\% $\pm$ 0.31\% & 60.43\% $\pm$ 1.18\% \\
     DINO & 0.334 $\pm$ 0.010 & 25.52\% $\pm$ 0.98\% & 76.12\% $\pm$ 0.79\% && 0.309 $\pm$ 0.015 & 20.08\% $\pm$ 0.62\% & 68.21\% $\pm$ 1.55\% \\
     VICReg & 0.334 $\pm$ 0.012 & 25.20\% $\pm$ 0.49\% & 76.46\% $\pm$ 0.94\% && 0.311 $\pm$ 0.010 & 20.62\% $\pm$ 1.11\% & 69.05\% $\pm$ 1.08\% \\
      \bottomrule     
      \end{tabular}
      \vspace{-10pt}
\end{table}
\addtolength{\tabcolsep}{\tabwc pt}

\subsection{Differential Privacy}

\begin{wraptable}{r}{4.5cm}
\caption{
\textbf{Effect of differential privacy.}\label{tab:Vip-MAE}}
\vskip \vskipdensity
\centering
\footnotesize
\centering
\scalebox{0.75}{\begin{tabular}{cccc}
\toprule
$\varepsilon$& \name  & Acc. (\%)\\ 
\midrule
$\infty$ &  0.307 $\pm$ 0.013  & 69.40\% $\pm$ 1.12\%  \\ 
$20$  &  0.182 $\pm$ 0.009 & 54.22\% $\pm$ 0.98\%  \\ 
$8$  & 0.107 $\pm$ 0.012 & 33.66\%   $\pm$ 1.76\%\\ 
\bottomrule
\end{tabular}}
\vskip \vskipdensity
\end{wraptable}

Differential privacy~\citep{dwork2006differential} provides mathematically rigorous protections against privacy leakage. This framework formalizes the intuition that any individual data point should have negligible influence on the analysis of an entire dataset. In machine learning, differential privacy is often implemented through the DP-SGD algorithm~\citep{abadi2016deep}, which introduces controlled noise during training and bounds the influence of each individual data point on model updates.
However, DP-SGD has a limited compatibility with many self-supervised learning paradigms, wherein individual samples influence model updates across their entire mini-batch. Nonetheless, \citet{yu2023vip} recently proposed a differentially private training framework for MAE encoders. Our analysis shows that indeed encoders trained with DP-SGD demonstrate reduced memorization.
To assess its effect on our \name memorization score, we train SSL encoders with the framework by~\citet{yu2023vip} on MAE and the ViT-tiny architecture.
We train for 1000 epochs on CIFAR10 using all default parameters from that work~\citep{yu2023vip} apart from their large mini-batch sizes that do not match the limited availability of data in CIFAR10.
To report a standard, non-private baseline, \ie $\varepsilon=\infty$, we train a standard MAE.
Our results in \Cref{tab:Vip-MAE} show that whilst differential privacy indeed reduces memorization depending on the privacy parameter $\varepsilon$, it also substantially reduces downstream accuracy.
This can be seen as another indicator that learning abilities in SSL suffer without memorization.

\section{Conclusion}

SSL has emerged as a dominant paradigm for training encoders, since it can leverage the abundant amounts of available unlabeled data to create high-quality feature extractors. However, despite their unprecedented performance, the memorization property of self-supervised encoders remain unexplored. %
Due to the lack of labels, a structured assessment of memorization, as commonly done in supervised learning, could not  be conducted previously. We close this gap by providing an analysis of encoder memorization in SSL.
Therefore, we first propose a definition for memorization based on augmentations and alignment of positive pairs---the common elements throughout all SSL methods. Our \name definition reflects SSL's lack of ground-truth labels, generalizes across different encoder architectures and SSL training algorithms, and is independent of any downstream task.
Crucially, we demonstrate that self-supervised encoders do memorize training data points, especially atypical examples.
 Further, we empirically show that memorization improves generalization on various downstream tasks, even beyond the encoder's pre-training data and its distribution, and beyond simple single label classification tasks. 
This establishes memorization as a key property of self-supervised feature learning.

\subsubsection*{Acknowledgments}
We would like to acknowledge our sponsors, who support our research with financial and in-kind
contributions: Amazon, Apple, CIFAR through the Canada CIFAR AI Chair, DARPA through the
GARD project, Intel, Meta, NSERC through the Discovery Grant, the Ontario Early Researcher
Award, and the Sloan Foundation. Resources used in preparing this research were provided, in part,
by the Province of Ontario, the Government of Canada through CIFAR, and companies sponsoring
the Vector Institute.

\bibliography{main}
\bibliographystyle{iclr2023_conference}

\appendix
\section{Experimental Setup}
\label{app:experimental-setup}

We validate our algorithms mainly on four state-of-art SSL encoders: MAE~\citep{mae}, SimCLR~\citep{chen2020simple}, DINO~\citep{caron2021dino}, and VicReg~\citep{bardes2022vicreg}. We train these encoders for 300 epochs with ImageNet ILSVRC-2012~\citep{russakovsky2015imagenet} and 600 epoch with CIFAR10~\citep{krizhevsky2009learning}, CIFAR100~\citep{krizhevsky2009learning} , SVHN~\citep{netzer2011reading}, and STL10\citep{coates2011analysis}. All other settings for model training and evaluating (linear-probing) are shown in Table \ref{tab:settings}. The ImageNet and STL10 based encoders are trained on a server with 2 NVIDIA-A100 GPUs. CIFAR10, CIFAR100, and SVHN-based encoders and all linear probing evaluation are performed on a 4090 GPU server with an Intel 13700K processor and 64G RAM.
To measure memorization, we divide the datasets as follows: For CIFAR10, CIFAR100, SVHN, and STL10 dataset, we use 40000 shared training samples as $S_S$ and 2 sets of 5000 non-overlapping training samples as $S_C$ and $S_I$. For ImageNet, $S_S$ comprises 85000 samples and $S_C$ and $S_I$ comprise again 5000 samples each.

\paragraph{Normalization.}
We normalize the representations output by encoders $f$ and $g$ in the $\ell_2$ norm. Then, we calculate the differences in alignment loss per data sample $x$ over both encoders. Afterwards, we normalize these differences by dividing them by the range (largest minus smallest difference), and report the memorization score as the average of the resulting scores over all data points in $S_C$. 

\paragraph{Semantic Segmentation Setup.}
To evaluate the effect of our memorization on semantic segmentation, we end-to-end fine-tune our ImageNet-based MAE encoders (ViT-base) on the ADE20K~\citep{zhou2019semantic} dataset with UperNet~\citep{xiao2018unified} for semantic segmentation. 
We perform 100 epochs of fine-tuning with a batch size of 16. The learning rate follows the "poly" learning rate schedule with a initial learning rate of 0.02. 
The relative position bias~\citep{raffel2020exploring} is only applied during end-to-end fine-tuning.

\paragraph{Depth Estimation Setup.}
In a similar vein to the semantic segmentation, for depth estimation experiment, we end-to-end fine-tune our ImageNet-based MAE encoders (ViT-base) with a UNet convolutional neural network~\citep{ronneberger2015u} on the NYU-Depth
v2~\citep{depth_estimation_data}.  
We report the quality of the depth estimation through the Root Mean Square Error (RMSE), which is defined as:

\begin{equation}
\sqrt{\frac{1}{n}\sum_{p = 1}^n(y_p - \hat{y_p})^2}
\end{equation}

where $y_p$ is the true depth from the NYU-Depth v2 dataset and $\hat{y_p}$ is the predicted depth from the model. The smaller the RMSE, the better the performance of the model.

\paragraph{Measuring Memorization.}
We calculate the memorization score on the full representations returned by the encoders.
Especially, for the ViT-based experiments, we concatenate the patch-based representations into one representation vector.
This yields the following dimensionalties for ViT-tiny: 192x257 = 49344, for ViT-base: 197*768 = 151296, for ResNet50: 49*2048 = 100352.
Note that particular downstream tasks with the ViT encoders use different parts of the representations. For example, for classification, only the representation of the CLS-token (the first of the 257 outputs) is used. For semantic segmentation, only outputs 2-257 are used.
Even though it increases compute time, we decided to compute the memorization score over the entire returned representation to make our score independent of the downstream task.
As a consequence of the significant difference in output dimensionality, and the fact that we calculate alignment loss with the $\ell_2$ distance,

\addtolength{\tabcolsep}{-2.5 pt}
\begin{table}[t]
\caption{\textbf{Experimental Setup.} We provide details on our setup for encoder training and evaluation.}        
          \label{tab:settings}
    \centering
    \scriptsize 
    \begin{threeparttable}
\begin{tabular}{cccccccccc}
\toprule
                       & \multicolumn{4}{c}{Model Training}                                       &  & \multicolumn{4}{c}{Linear Probing}                      \\ \cmidrule{2-5} \cmidrule{7-10} 
                       & MAE                         & SimCLR       & DINO         & VicReg       &  & MAE          & SimCLR       & DINO         & VicRef       \\ \midrule
\begin{tabular}[c]{@{}c@{}}Training Epoch\\ (Imagenet / others)\end{tabular}         & 300 / 600 & 300 / 600      & 300 / 600      & 300 / 600      &  & 45 / 90        & 45 / 90        & 45 / 90        & 45 / 90        \\
\begin{tabular}[c]{@{}c@{}}Warm-up Epoch\\ (Imagenet / others)\end{tabular}          & 30 / 60                       & 30 / 60        & 30 / 60        & 30 / 60        &  & 5 / 10         & 5 / 10         & 5 / 10         & 5 / 10         \\
Batch Size             & 2048                        & 4096         & 1024         & 256          &  & 4096         & 4096         & 4096         & 4096         \\
Optimizer              & AdamW                      & LARS        & AdamW        & SGD       &  & LARS         & LARS         & LARS         & LARS         \\
Learning rate          & 1.2e-3                      & 4.8          & 2e-3         & 3e-3         &  & 1.6          & 4.8          & 1.6          & 1.6          \\
Learning rate Schedule & Cos. Decay                & Cos. Decay & Cos. Decay & Cos. Decay &  & Cos. Decay & Cos. Decay & Cos. Decay & Cos. Decay \\ \bottomrule 
\end{tabular}
\begin{tablenotes}
\item[1] the format for epoch number is ImageNet / Other 
\end{tablenotes}
\end{threeparttable}
\end{table}
\addtolength{\tabcolsep}{2.5 pt}

\subsection{Experimentally Approximating our Memorization Score}

A completely faithful assessment of our definition of memorization (\Cref{eq:memdef}) would involve, per data point,  training multiple encoders with and without this data point and evaluating their representations.
Given the large number of parameters and the high number of training epochs required to train in SSL, this is computationally prohibitive. 
This suggests that, for our experimentation, we have to approximate the memorization score.
There are multiple ways to do so with their own advantages and drawbacks. 
We present the possibility in the following and motivate the choice of our approximation.

\paragraph{Disjoint subsets between $f$ and $g$.}
In as similar vein to \citet{meehan2023ssl}, we could train $f$ and $g$ on completely disjoint subsets of the original training dataset (\eg 25k+25k in the CIFAR10 case). 
Yet, in this setup, given that the two encoders' training data differs in all data points, it becomes increasingly hard to attribute the difference in their behavior to individual data points. 
This motivates our choice to have a joint training set $S_S$ between $f$ and $g$ and make them differ only in a subset of samples.
Ideally, this subset would be as small as possible to more faithfully assess the impact of each individual data point.
However, choosing smaller subsets leaves us with less samples to evaluate. 
To address this trade-off, we decided to make $f$ and $g$ overlap in 80\% and differ in 10\% of their data sets' initial size, and take this 10\% data only used for $f$ as \canaries.
We carried out additional experiments to showcase that the memorization score does not change with higher overlapping ratios (85\%) but decreases for smaller ratios (70\%) in \Cref{tab:ablation_overlap}. Thus, the ratios below 80\% do not provide us with a sufficiently precise measure of memorization and that our choice of 80\% is sufficient to well approximate the metric while being computationally efficient and allowing to assess the largest possible number of training data points at the same time.

\begin{table}[t]
\caption{\textbf{Impact of the fraction of overlap between $f$ and $g$.}
We repeated experiments from \Cref{tab:memorization_overview} with ResNet50 trained with SimCLR on CIFAR10 with different splits for the overlap (70\% overlap, and 85\% overlap). For the best comparability, we made sure to have the same number of training data points over all setups (45k). 
}
\label{tab:ablation_overlap}
\centering
\begin{tabular}{ccccc}
\toprule
$S_S$, and  & $S_C$ & $S_I$ & \name & Acc. (\%)         \\
\midrule
35k (70\%)  & 10k   & 10k   & 0.325 $\pm$ 0.008   & 77.95\% $\pm$ 1.23\% \\
40k (80\%)  & 5k    & 5k    & 0.339 $\pm$ 0.011   & 77.12\% $\pm$ 1.42\% \\
42.5 (85\%) & 2.5k  & 2.5k  & 0.337 $\pm$ 0.010   & 76.84\% $\pm$ 0.85\% \\
\bottomrule
\end{tabular}
\end{table}

\paragraph{Removing or replacing data points in $g$.}
After deciding in how many data points $f$ and $g$ should differ, the next choice is regarding how to modify the training data of $g$.
Our definition of memorization indicates that the \canaries should be removed without replacement in $g$.
This enables to clearly measure their effect on training without having potentially different data point interfere.
However, we empirically observed that removing 10\% of the training data leaves $g$ with a generally worse alignment than $f$. 
This would skew the memorization score (because the alignment loss of $g$ would be generally higher). 
As a solution, we decided not to simply remove the \canaries for training $g$, but to replace them with an independent data subset $S_I$ of same size from the same distribution.

\subsection{Thresholding of our Memorization Score}
\label{app:normalization}

\begin{figure}[t]
\centering
\begin{subfigure}[t]{0.3\textwidth}
\includegraphics[width=\textwidth]{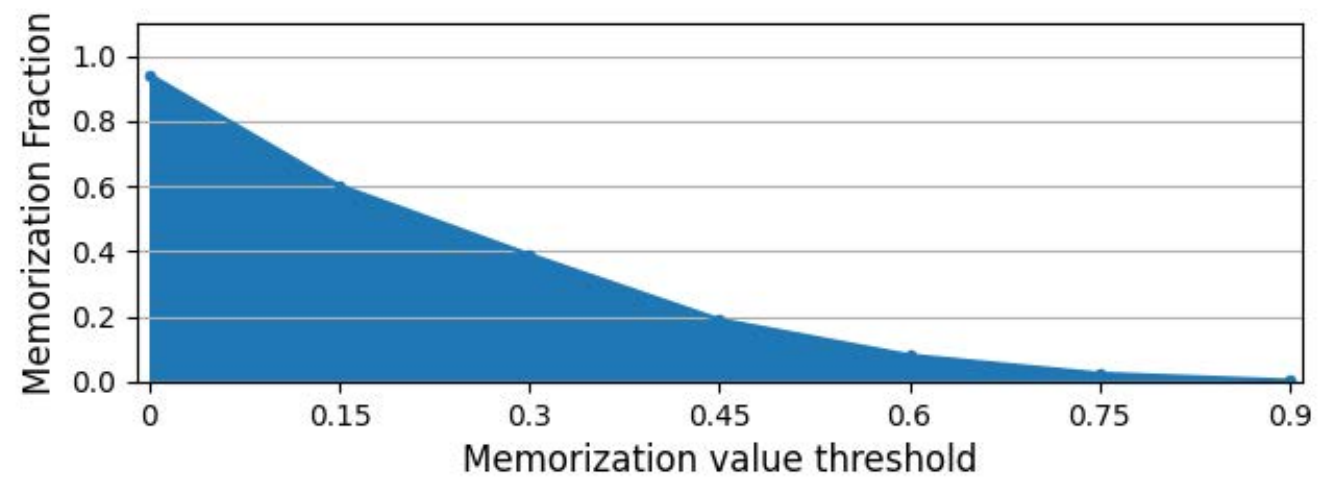}
\caption{ImageNet}
\label{fig:mem_thres_imagenet}
\end{subfigure}
\begin{subfigure}[t]{0.3\textwidth}
\includegraphics[width=\textwidth]{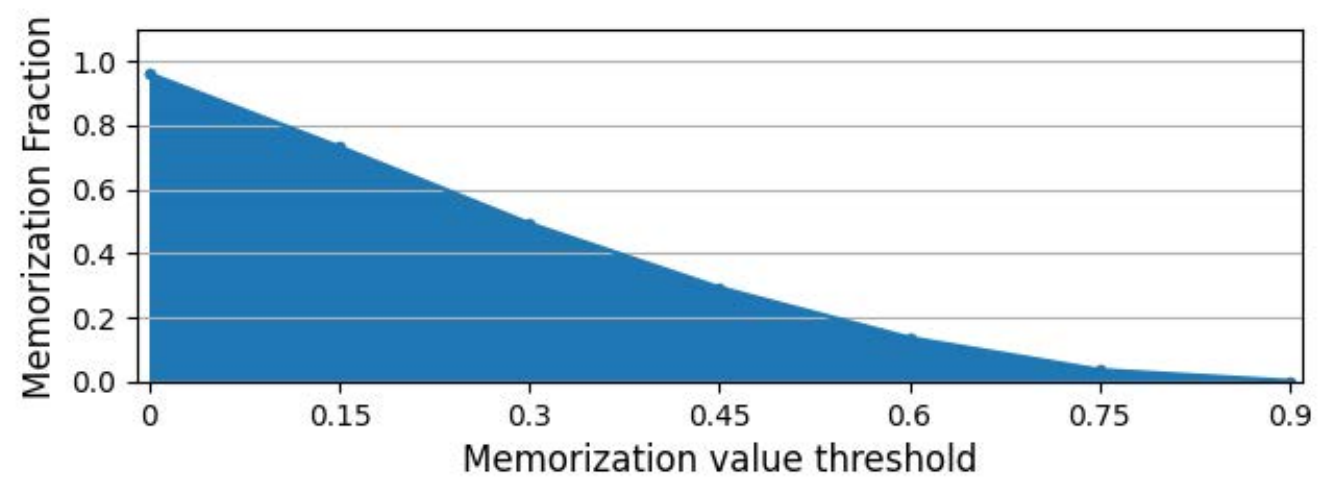}
\caption{CIFAR10}
\label{fig:mem_thres_cifar100}
\end{subfigure}
\begin{subfigure}[t]{0.3\textwidth}
\includegraphics[width=\textwidth]{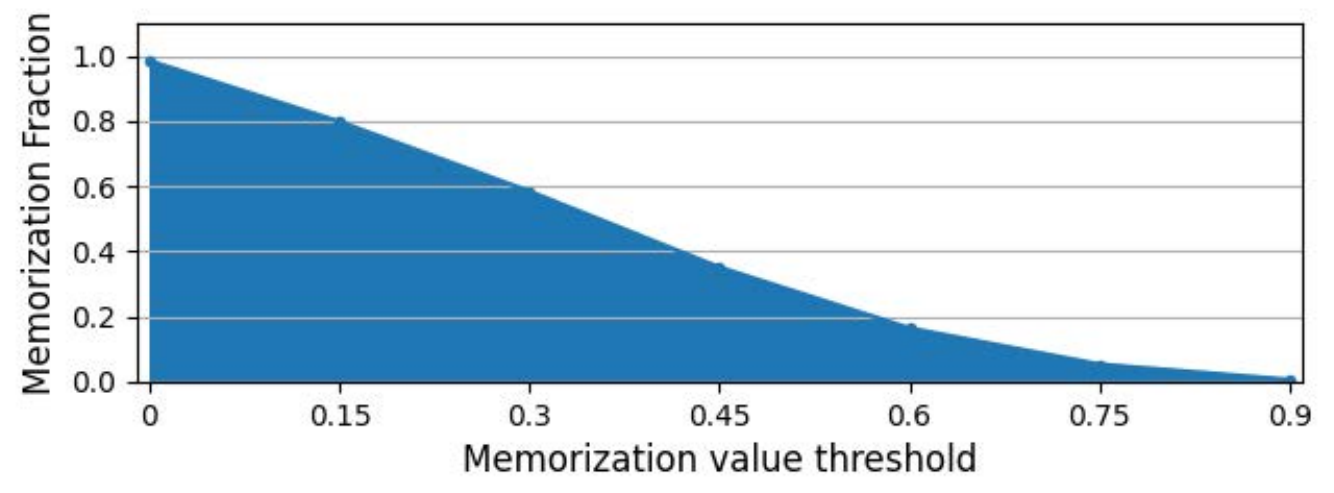}
\caption{MNIST}
\label{fig:mem_thres_mnist}
\end{subfigure}
\caption{\textbf{Influence of the memorization threshold.} 
Using the MAE-base model, we depict what fraction of data points from the respective \canary dataset would be classified as memorized by our definition when choosing the memorization threshold according to the number depicted on the x-axis.
}
\label{fig:mem_threshold}
\end{figure}

One important consideration regarding the memorization score concerns the question \textit{When is a data point memorized?}
This question could be addressed by setting a threshold on the memorization score that categorizes samples into memorized and non-memorized.
Yet, this would indicate that memorization is a binary concept, and it would involve the choice of a threshold. 
Since this threshold would have to be set arbitrarily (with respect to some desired outcome, like obtaining a certain fraction of memorized data points), we refrain from this choice and rather report the continuous memorization scores.
The continuous scale captures nuanced differences in how strongly various data points affect each encoder.
Additionally, we show in Figure \ref{fig:mem_threshold} how the number of samples classified as memorized would change for different memorization value thresholds. This further illustrates that our memorization score forms a continuous spectrum.
We present additional structured insights into our memorization score in \Cref{app:mem-scores}.

\section{Additional Experimental Results}
\label{app:experimental-results}

\subsection{The Influence of Augmentations}
\label{app:augmentations}

We study the impact of the type of augmentations used for training on the average memorization and the linear probing accuracy at the example for SimCLR with a ResNet50 encoder on the CIFAR10 dataset.
We show that cropping causes larger average memorization than noise addition or masking.
Intuitively, this makes sense given that our memorization score relies on representation alignment where the noised version of a red and a blue car are still far away in input space and, therefore, might result in different representations, whereas a crop of their window or tire might be very similar, resulting in well-aligned representation~\citep{huang2021towards}.
Again, we observe that this is closely related to the linear probing accuracy.

We additionally study the impact of augmentation strength in form of the masking ratio in MAE.
We observe that average memorization peaks at a 75\% masking ratio, again, aligned with the highest linear probing accuracy..
We present our results in \Cref{tab:augmentations}.

\begin{table}[t]
\caption{\textbf{Impact of measuring memorization with different augmentations than the ones used during training.} We train a ResNet50 on CIFAR10 with SimCLR and measure the memorization score with different augmentations.}
\centering
\label{tab:measure_agumentations}
\begin{tabular}{ccc}
\toprule
Augmentation for Measurement & Average SSLMem Memorization&  \\
\midrule
SimCLR original                       & 0.339 ± 0.011                        &  \\
GaussianNoise (mean=0 and std=0.2)    & 0.321 ± 0.014                        &  \\
Rotate 90°                            & 0.308 ± 0.009                        &  \\
Rotate 270°                           & 0.328 ± 0.011                        &  \\
ColorDrop 0.25                        & 0.298 ± 0.006                        & \\
\bottomrule
\end{tabular}
\end{table}

Finally, in \Cref{tab:measure_agumentations}, we depict the impact of measuring memorization with a different set of augmentations than the ones used during training.
We experimented with ResNet50 trained on CIFAR10 by SimCLR (77.12\% accuracy on the downstream classifier). SimCLR originally uses the following augmentations: RandomResizedCrop(32), RandomHorizontalFlip(p=0.5), ColorJitter(0.4, 0.4, 0.4, 0.1)], p=0.8), and RandomGrayscale(p=0.2). 
The results indicate that using the same original augmentations that were used during training also for measuring memorization yields the highest memorization score, \ie gives the strongest signal to measure memorization. Yet, the other augmentations’ scores are not significantly different, and hence can be used equally to approximate the degree of memorization.

\begin{table}[t]
\caption{\textbf{The effect of different type and strength of augmentations on memorization.} 
We train 
on the CIFAR10 dataset and measure the effect of different augmentation types (for SimCLR) and augmentations strengths (in the form of the masking ratio in MAE) on the average memorization score and linear probing accuracy.
}
\label{tab:augmentations}
\tiny
\begin{subtable}{0.45\textwidth}
\begin{tabular}{ccc}
\toprule
   & Avg. Mem. & linear probing acc (\%) \\ \midrule
crop (only)                          & 0.322 $\pm$ 0.010&   74.51\%   $\pm$ 1.38\%                               \\ 
crop+resize   
& 0.326 $\pm$ 0.014 & 75.22\%   $\pm$ 0.96\%      \\
random Gaussian noise                     &    0.319 $\pm$ 0.006           &   71.94\%   $\pm$ 1.62\%            \\ 
random masking (75\% MAE)                  &         0.288  $\pm$ 0.012      & 63.71\% $\pm$ 1.06\%        \\ \bottomrule
\end{tabular}
\caption{Different augmentation types for SimCLR trained with ResNet50.}
\label{tab:eva-mem}
\end{subtable}
\hfill
\begin{subtable}{0.45\textwidth}
\begin{tabular}{ccc}
\toprule
 masking ratio  & Mem.Frac. & linear probing acc (\%) \\ 
 \midrule
50\% & 0.283 $\pm$ 0.010 & 62.09\% $\pm$ 0.43\% \\                                      75\%       &   0.307 $\pm$ 0.012 & 67.40\% $\pm$ 1.10\% \\ 
80\%     & 0.300 $\pm$ 0.011 & 65.06\% $\pm$ 1.35\%\\ 
90\%  & 0.249 $\pm$ 0.009 & 58.77\% $\pm$ 1.26\%\\
 \bottomrule
\end{tabular}
\caption{Different augmentation strengths implemented through different masking ratios in MAE with the ViT Tiny architecture.}
\label{tab:eva-mem-mae}
\end{subtable}
\end{table}

\subsection{Link between Memorization and Generalization}
\label{app:mem_gen}

\paragraph{Classification. }In a similar vein to \Cref{fig:mem_equals_gen} in the main paper, we repeat the experiment and pretrain the encoder on STL10 \Cref{fig:mem_equals_gen_stl10} and CIFAR100 \Cref{fig:mem_equals_gen_cifar100}.
We remove the top memorized vs. random data points and measure linear probing accuracy on CIFAR10, CIFAR100, and STL10. 
Our results show that over all datasets, even though they have different numbers of classes, or come from different distributions, it holds that the removal of memorized data points has a more detrimental effect to accuracy than the removal of random points.
Results for more fine-grained datasets (ImageNet, Food-101, and Flower102) can be found in \Cref{app:more_datasets}.

\addtolength{\tabcolsep}{-1.8 pt}
\begin{table*}[t]
\caption{\textbf{Evaluating the effect of memorization on a depth estimation.} 
We pre-train a ViT-base with MAE on ImageNet and remove the top [10k, 20k] memorized vs. random data points. We end-to-end fine-tune the resulting encoders on the NYU-Depth
v2~\citep{depth_estimation_data}.  
We report the quality of the depth estimation through the Root Mean Square Error (RMSE).
\label{tab:depth_estimation}
}
    \centering
    \scriptsize
\begin{tabular}{cccccccc}
\toprule
           & Without removing  & \multicolumn{2}{c}{Removing 10000}   &  & \multicolumn{2}{c}{Removing 20000}   \\
                      &  &  Memorized &  Random &  &  Memorized &  Random \\
          \midrule
          RMSE     &      0.289             &  0.295             &  0.292               &  &              0.311      &      0.302          \\
Acc. (\%)   &   70.22\% $\pm$ 1.15\%               &    69.10\% $\pm$ 0.88\%                &  69.61\% $\pm$ 0.98\%               &  &     67.31\% $\pm$ 1.36\%              &     68.28\%  $\pm$ 1.02\%      \\
\bottomrule 
\end{tabular}
\end{table*}
\addtolength{\tabcolsep}{1.8 pt}

\paragraph{Depth Estimation.} In a similar vein to the segmentation downstream task, we pre-train a ViT-base with MAE on ImageNet, evaluate memorization, and remove the top [10k, 20k] memorized vs. random data points from the encoder's pre-training data. 
We end-to-end fine-tune the resulting encoders on the NYU-Depth
v2~\citep{depth_estimation_data}.
We measure downstream accuracy on ImageNet for the fine-tuned encoder through linear probing and the quality of the depth estimation through the Root Mean Square Error (RMSE).
Smaller RMSE indicates a better depth estimation.
Our results in \Cref{tab:depth_estimation} highlight that removing memorized samples from pre-training harms downstream performance on the depth estimation more than removing random samples.

\begin{figure}[t]
\centering
\begin{subfigure}[t]{0.3\textwidth}
\includegraphics[width=\textwidth]{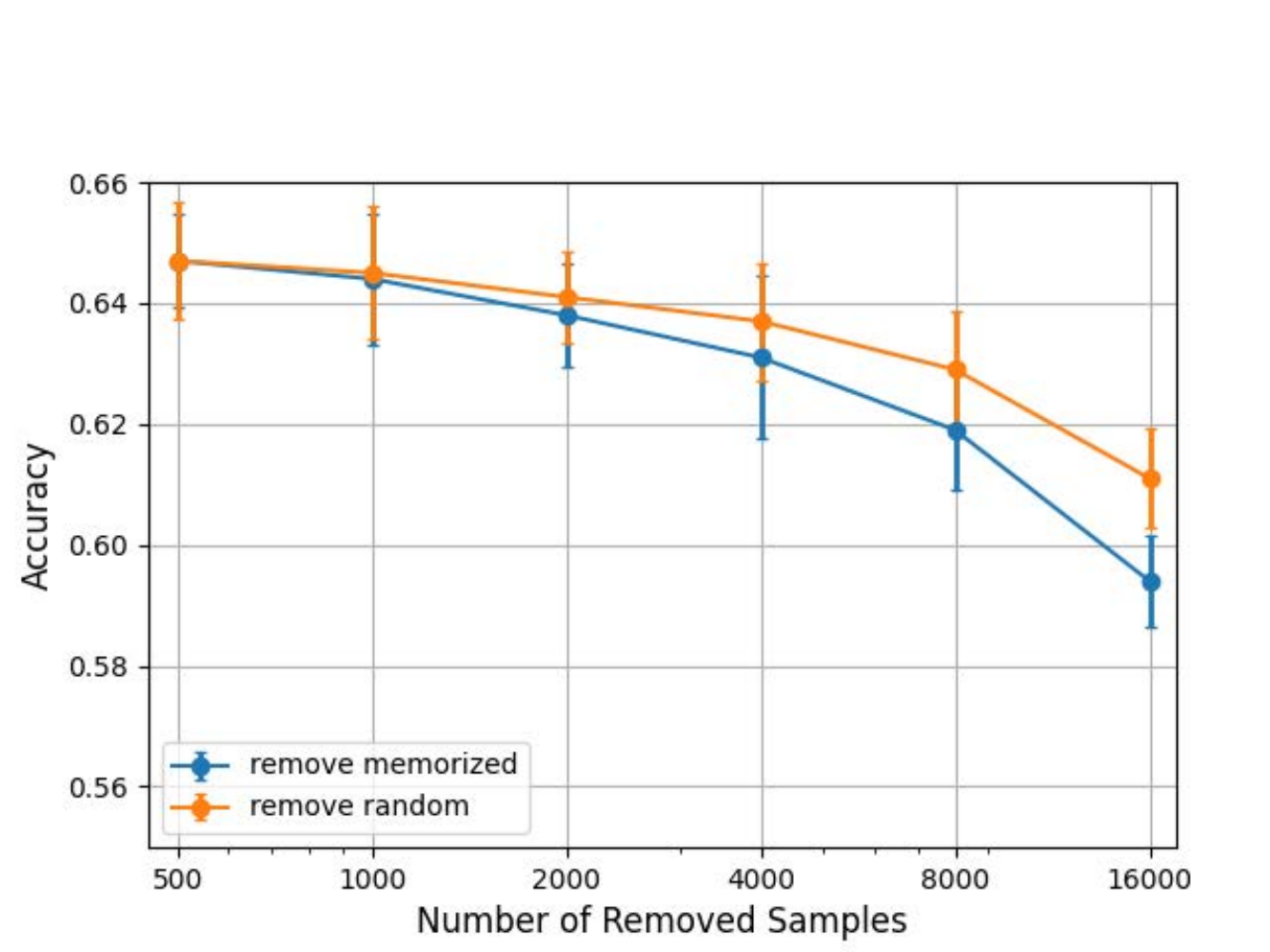}
\caption{On CIFAR10}
\end{subfigure}
\begin{subfigure}[t]{0.3\textwidth}
\includegraphics[width=\textwidth]{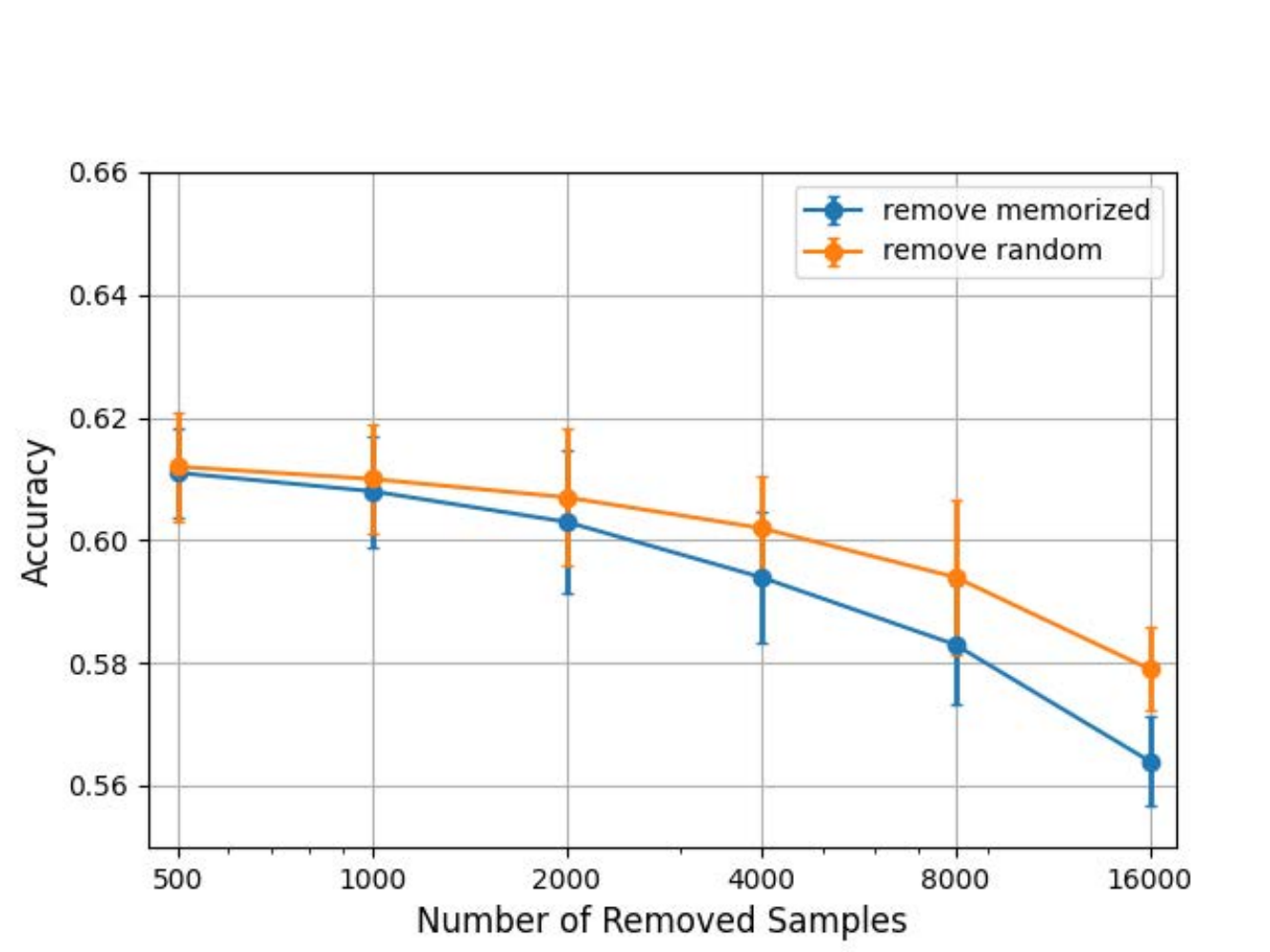}
\caption{On CIFAR100}
\end{subfigure}
\begin{subfigure}[t]{0.3\textwidth}
\includegraphics[width=\textwidth]{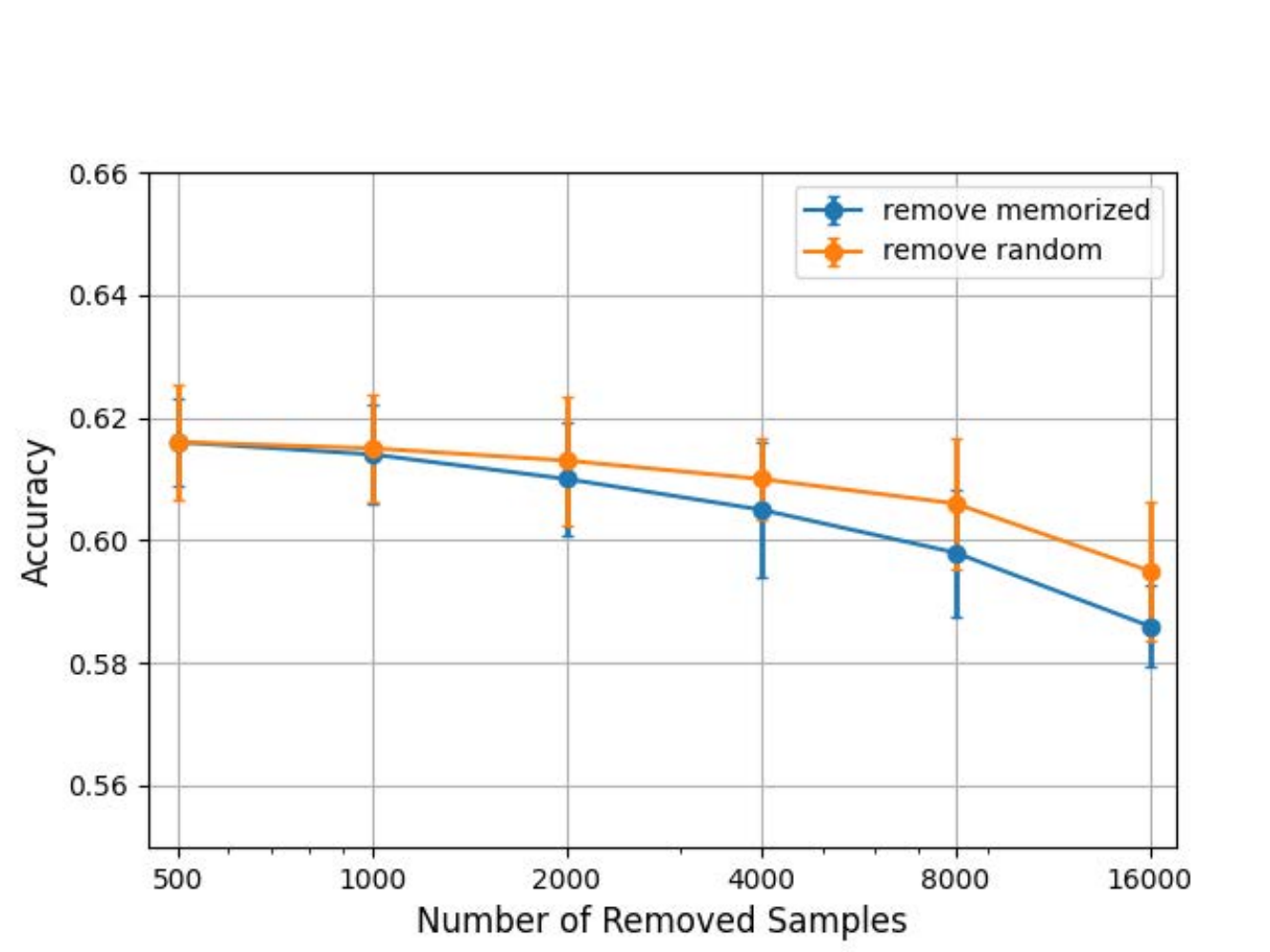}
\caption{On STL10}
\end{subfigure}
\caption{\textbf{The influence of memorization on downstream generalization (STL10).}
We train an MAE model based on the VIT-tiny architecture on STL10 and remove [500, 1k, 2k, 4k, 8k, 16k] most memorized vs. random data points from the encoder's training data. We measure downstream accuracy through linear probing on CIFAR10, CIFAR100, and STL10. The removal of memorized data points harms accuracy over all downstream tasks more than the removal of random data points.
}
\label{fig:mem_equals_gen_stl10}
\end{figure}

\begin{figure}[t]
\centering
\begin{subfigure}[t]{0.3\textwidth}
\includegraphics[width=\textwidth]{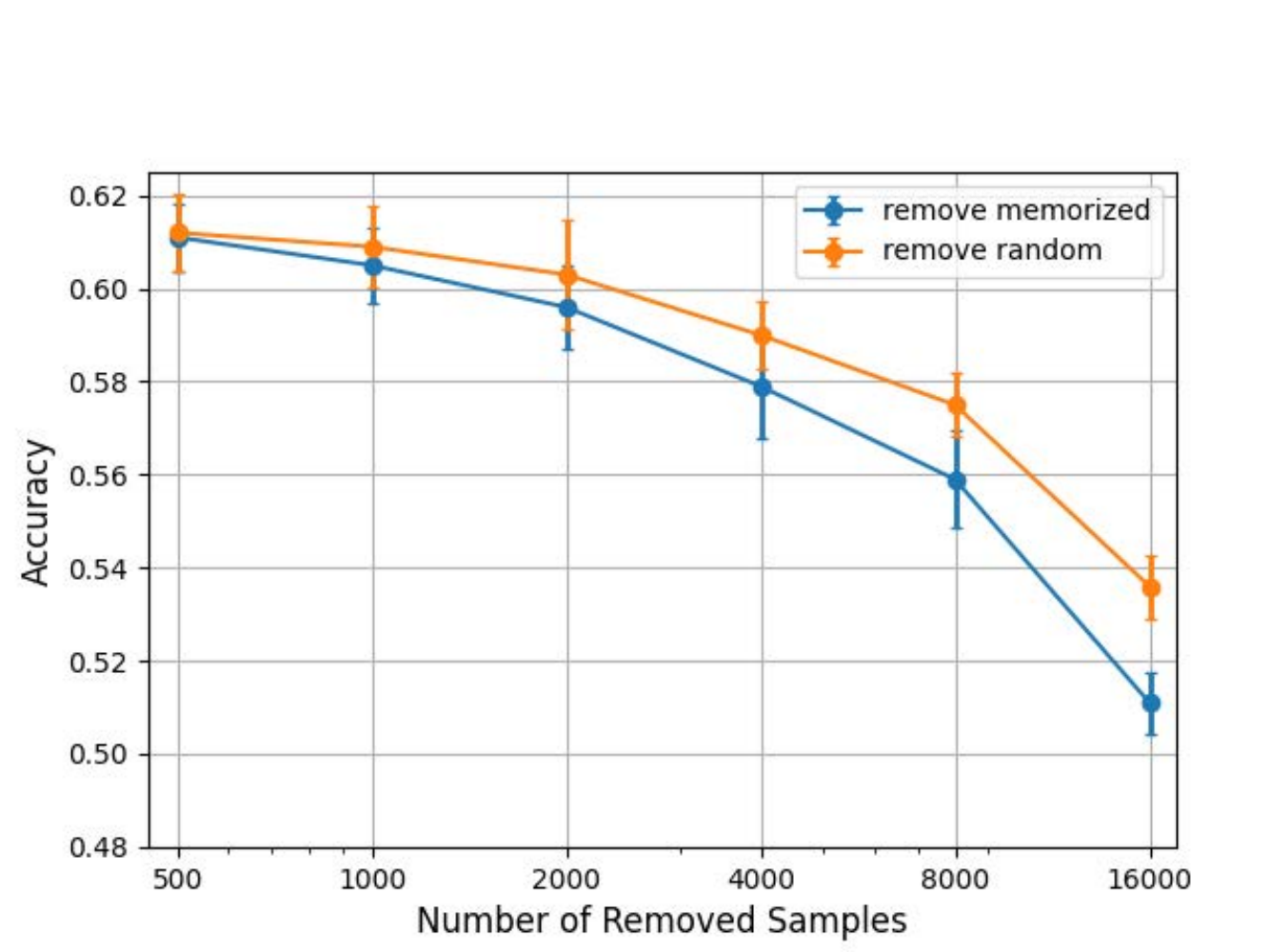}
\caption{On CIFAR10}
\end{subfigure}
\begin{subfigure}[t]{0.3\textwidth}
\includegraphics[width=\textwidth]{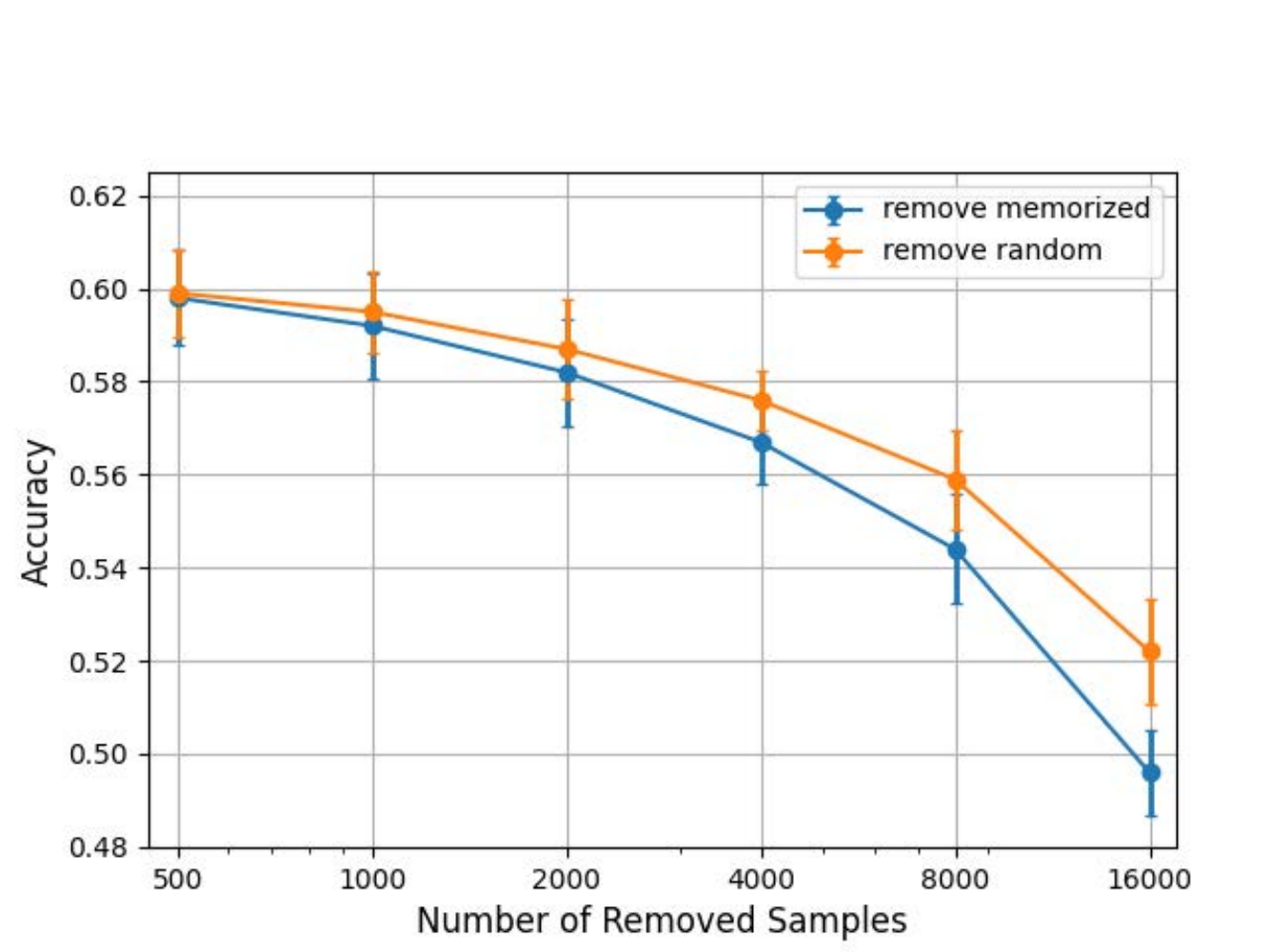}
\caption{On CIFAR100}
\end{subfigure}
\begin{subfigure}[t]{0.3\textwidth}
\includegraphics[width=\textwidth]{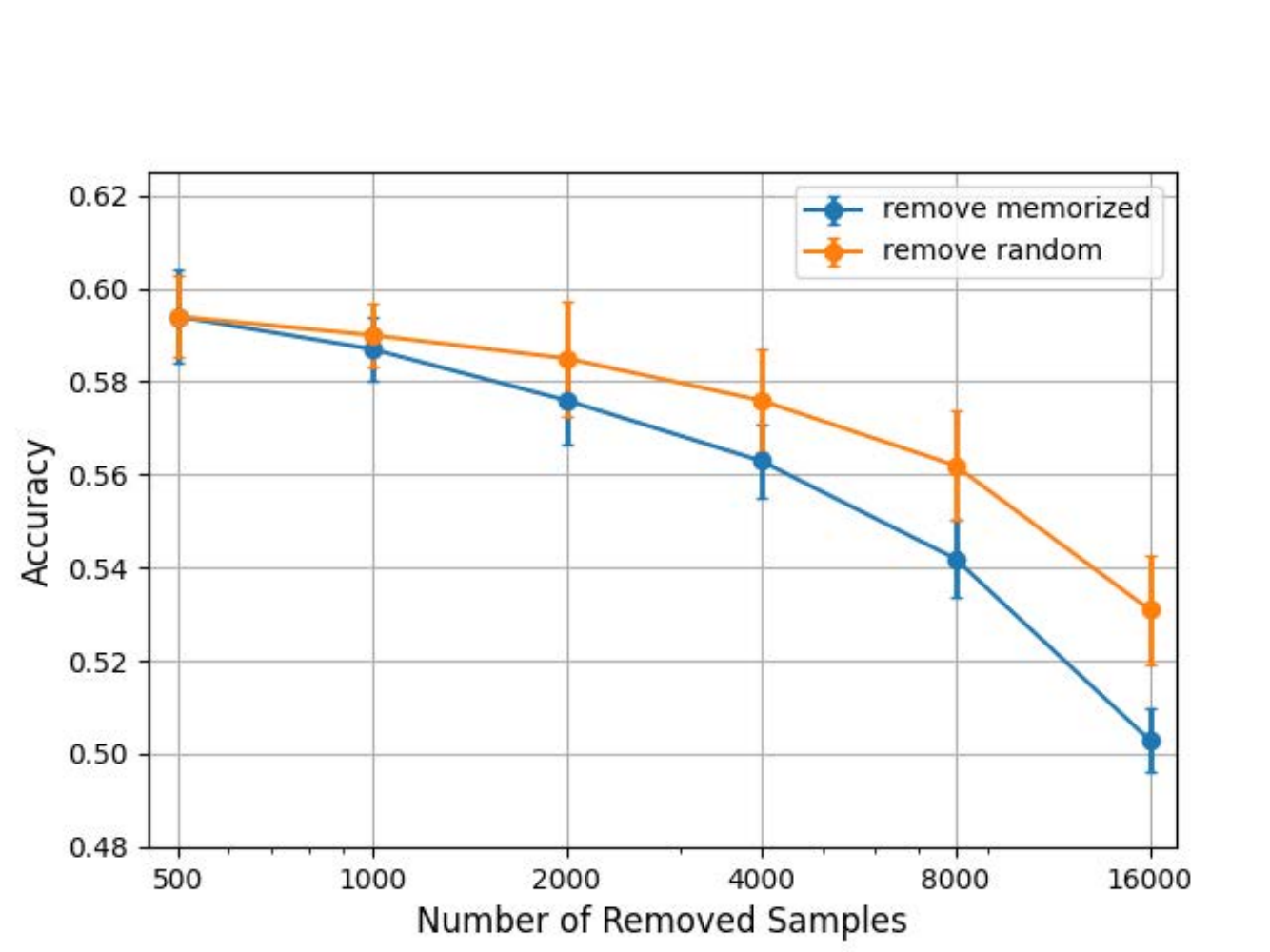}
\caption{On STL10}
\end{subfigure}
\caption{\textbf{The influence of memorization on downstream generalization (CIFAR100).}
We train an MAE model based on the VIT-tiny architecture on CIFAR100 and remove [500, 1k, 2k, 4k, 8k, 16k] most memorized vs. random data points from the encoder's training data. We measure downstream accuracy through linear probing on CIFAR10, CIFAR100, and STL10. The removal of memorized data points harms accuracy over all downstream tasks more than the removal of random data points.
}
\label{fig:mem_equals_gen_cifar100}
\end{figure}

\subsection{Memorization in Supervised Learning and SSL}
\label{app:supervised_memorization}

\begin{table}[t]
    \caption{\textbf{Impact of training paradigm on memorization.} We train a ResNet50 in a supervised manner with CIFAR10, and then remove the classification head to keep only the encoder part. We also train the ResNet50 with DINO in an SSL manner. We compare the average memorization and the linear probing accuracy of both resulting encoders. 
    }
    \label{tab:SSL_vs_supervised}
    \centering
    \begin{tabular}{ccc}
    \toprule
     Training Method & Avg. Mem.  & Linear Probing Acc. (\%) \\
    \midrule
     Supervised (100 epoch) &  0.398 $\pm$ 0.010 & 90.10\% $\pm$ 1.34\%\\
     SSL (100 epoch) & 0.314 $\pm$ 0.011 & 69.12\% $\pm$ 0.87\%\\
     SSL (300 epoch) & 0.327 $\pm$ 0.009 & 75.39\% $\pm$ 1.15\%\\
     Supervised (10 epoch) &  0.327 $\pm$ 0.014 & 75.16\% $\pm$ 0.99\%\\
    \bottomrule
    \end{tabular}
\end{table}

\paragraph{Analyzing supervised models' internal representations with \name.}To analyze memorization in supervised learning with our score, we train a ResNet50 in a supervised way on CIFAR10, using the cross-entropy loss.
Then, we turn the resulting model in into an encoder by removing the last (classification) layer. 
We train the supervised model for a numbers of epochs. 
After 10 epochs, the accuracy of the model roughly matches the linear probing accuracy of the encoder trained with DINO.
After 100 epochs of supervised training, the training loss plateaus.
We repeat the experiment three times and report the average and standard deviation.
The results are reported in \Cref{tab:SSL_vs_supervised}.
Our results highlight the supervised models trained until convergence have the highest memorization score, which is a direct consequence of the high number of training iterations over the training data points. 
Encoders trained with SSL for the same number of epochs experience a significantly lower average memorization.
When considering models that are aligned in downstream performance (supervised 10 epochs, vs. SSL 300 epochs), the computed memorization scores are comparable.
Yet, as we will show in \Cref{app:memorized_samples} section, the two learning paradigms differ significantly in what types of data points they memorize.

\begin{figure}[t]
    \centering
\includegraphics[width=\textwidth]{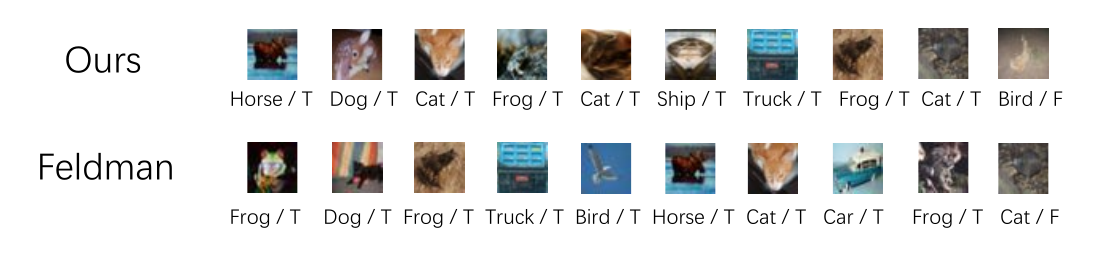}
    \caption{\textbf{Most memorized data points identified with our \name vs \citep{feldman2020does}.}  We plot the ten data points from CIFAR10 with the highest memorization according to our memorization score and the metric for memorization in supervised learning proposed by~\cite{feldman2020does}.
    We observe a high overlap between the most memorized samples identified by both methods.
    }
    \label{fig:feldman}
\end{figure}

\addtolength{\tabcolsep}{-3.8 pt}
\begin{table}[t]
\caption{
\textbf{Consistency between most memorized data points identified with our \name vs \citep{feldman2020does}.}  We depict the consistency between the [10,20,50,75,100,150,200]  most memorized data points identified by our metric and the metric for supervised learning proposed by~\cite{feldman2020does}.
The first row shows the percentage of overlap and the second one the results of the statistical Kendall's Tau Test as $\tau$-Statistic / p-value.
}
\label{tab:feldman}
\centering
\scriptsize
\begin{tabular}{cccccccc}
\toprule
Within first X samples & 10  & 20          & 50          & 75            & 100           & 150           & 200           \\
\midrule
\% Overlap             & 50.0\%      & 35.0\%      & 48.0\%      & 42.0\%        & 39.0\%        & 41.3\%        & 44.0\%        \\
Kendall's Tau Test           & 0.099 / 0.584 & 0.124 / 0.332 & 0.162 / 0.107 & 0.158 / 5.42e-2 & 0.188 / 3.21e-2 & 0.174 / 9.66e-3 & 0.192 / 5.48e-4 \\
\bottomrule
\end{tabular}
\end{table}
\addtolength{\tabcolsep}{3.8 pt}

\paragraph{Comparing most memorized points from \name and supervised learning.}
We also assessed whether our score highlights the same data points as highly memorized as the metric proposed for supervised learning by \citep{feldman2020does}.
Therefore, we trained a model $f$ and a model $g$ (both ResNet50 on CIFAR10) in a supervised manner. For best comparability with our results, we chose 40k data points overlap between the two models and 5k difference. On the 5k data points used to train $f$ but not $g$, we calculated the difference in softmax outputs between $f$ and $g$. The data points with the highest difference are the ones with the highest memorization according to \cite{feldman2020does}.
To calculate our metric on the same models, we removed the classification layer and then calculated our metric on the output representations.
We present the ten most memorized data points identified by both methods in \Cref{fig:feldman}.
Additionally, we analyze the overlap between both methods in \Cref{tab:feldman}. The results indicate that there is a roughly 50\% overlap between the most memorized samples identified by both methods, and that the ranking between the samples is similarly consistent as the rankings by our \name method over different SSL frameworks (see \Cref{tab:Tau_test}).

\subsection{Analysis of Memorized Samples}
\label{app:memorized_samples}

We set out to perform an in-depth analysis of the memorized samples.
In particular, we compare the samples memorized by different SSL frameworks and architectures, and the difference in memorized samples between SSL and supervised learning.
To obtain the highly memorized samples from supervised learning according to our score, we rely on the process described in the previous Section~\ref{app:supervised_memorization}.

\addtolength{\tabcolsep}{-3.8 pt}
\begin{table*}[t]
\caption{\textbf{Results of Kendall's Tau Test.} We test consistency of the rankings statistically of 5000 \canaries over all models used for evaluation in this paper.
Note that the score is symmetric. We repeat the values in \gray{gray} for the reader's convenience.
}
\label{tab:Tau_test}
\centering
\begin{threeparttable}
\scriptsize
\begin{tabular}{ccccccc}
\toprule
& \multicolumn{1}{c}{\begin{tabular}[c]{@{}c@{}}MAE\\ ViT-tiny\end{tabular}} & \multicolumn{1}{c}{\begin{tabular}[c]{@{}c@{}}DINO\\ ViT-tiny\end{tabular}} & \multicolumn{1}{c}{\begin{tabular}[c]{@{}c@{}}DINO\\ ResNet\end{tabular}} & \multicolumn{1}{c}{\begin{tabular}[c]{@{}c@{}}SimCLR\\ ResNet\end{tabular}} 
& \multicolumn{1}{c}{\begin{tabular}[c]{@{}c@{}}Supervised\\ ResNet, 100 epochs\end{tabular}} & \multicolumn{1}{c}{\begin{tabular}[c]{@{}c@{}}Supervised\\ ResNet, 10 epochs\end{tabular}} 
\\\midrule
MAE (ViT-tiny)  & \gray{1.0 / 0} & \gray{0.235 / 2.2e-9} & \gray{0.218 / 8.7e-9} & \gray{0.207 / 5.1e-8} & \gray{0.083 / 5.6e-5} & \gray{0.104 / 1.3e-5} \\
DINO (ViT-tiny)   & 0.235 / 2.2e-9      & \gray{1.0 / 0}       &   \gray{0.258 / 9.8e-12}  &  \gray{0.214 / 1.0e-8} & \gray{0.074 / 9.8e-4} & \gray{0.092 / 3.2e-5} \\
DINO (ResNet)      & 0.218 / 8.7e-9   & 0.258 / 9.8e-12 & \gray{1.0 / 0} & \gray{0.255 / 5.9e-11} & \gray{0.091 / 3.4e-5}  & \gray{0.112 / 9.7e-6 }\\
SimCLR (ResNet)     & 0.207 / 5.1e-8  & 0.214 / 1.0e-8  & 0.255 / 5.9e-11& \gray{1.0 / 0} &\gray{0.104 / 1.2e-5} &\gray{0.096 / 2.2e-5}\\
Supervised (ResNet), 100 epochs & 0.083 / 5.6e-5   & 0.074 / 9.8e-4  & 0.091 / 3.4e-5   & 0.104 / 1.2e-5 & \gray{1.0 / 0} &\gray{0.131 / 2.3e-6}\\  
Supervised (ResNet), 10 epochs & 0.104 / 1.3e-5 & 0.092 / 3.2e-5 & 0.112 / 9.7e-6 & 0.096 / 2.2e-5 &  0.131 / 2.3e-6 &\gray{1.0 / 0}\\ 
\bottomrule             
\end{tabular}
\begin{tablenotes}
\item[1] the format for all data is $\tau$-Statistic / p-value
\end{tablenotes}
\end{threeparttable} 
\end{table*}
\addtolength{\tabcolsep}{3.8 pt}

\begin{figure}[t]
    \centering
    \includegraphics[width=0.97\linewidth]{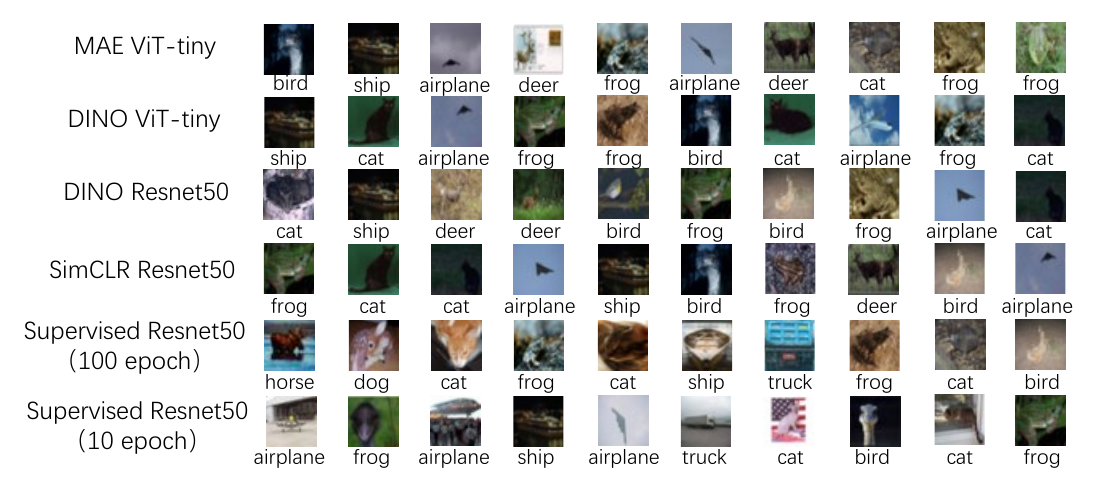}
    \caption{\textbf{Samples with highest memorization over different SSL frameworks and encoder architectures.}
    We depict per setup the top 10 data points with the highest memorization scores and their ground-truth labels.}
    \label{fig:memorized_samples_plot}
\end{figure}

We first visually inspect the samples that experience the highest memorization over all frameworks in \Cref{fig:memorized_samples_plot}.
Overall, the highest memorized samples seem to be more consistent between the different SSL frameworks than between SSL and supervised learning.
This holds for both the supervised models, the one that is trained 100 epochs and the one that is trained 10 epochs, and thereby matches downstream performance of the SSL encoders (see \Cref{tab:SSL_vs_supervised}).
To quantify this visual impression, we analyze the rankings of memorization scores over the 5000 \canaries for all different setups with a pairwise Kendall's Tau test.
We present the results in \Cref{tab:Tau_test}.
The null-hypothesis of the test is an absence of association between the two rankings, which means that when we have a p-value below $0.05$, \ie when we can reject the null-hypothesis, there is an association in the ranking.  
In the table, we indeed observe that the consistency between the rankings of memorization scores between different SSL frameworks is higher than the consistency between SSL and supervised models. 
In addition, among different SSL frameworks, the ones that share the same architecture (or training method) have a higher consistency.

\subsection{Extended Analysis on More Datasets and Model Architectures}
\label{app:more_datasets}
We present an empirical evaluation on the effect of removing memorized vs. random samples on more fine-grained datasets in \Cref{tab:fine_grained_data}.

Additionally, in \Cref{tab:more_architectures}, we also report results for the same architectural family (ResNet) with different depths (ResNet50 vs ResNet30), and widths (Wide-ResNet). Our results show how the memorization score differs for various number of parameters and their arrangement and how it can influence the memorization score. We observe that with more parameters, encoders have higher memorization capacity.

\addtolength{\tabcolsep}{-1.8 pt}
\begin{table}[t]
\caption{\textbf{Evaluating the effect of memorization on   downstream tasks.} 
We pre-train a ViT-base with MAE on ImageNet and remove the top [10k, 20k] memorized vs. random data points. 
We easure downstream accuracy through linear probing on ImageNet, Food-101, and Flower102. The removal of memorized
data points harms accuracy over all downstream tasks more than the removal of random data points.
\label{tab:fine_grained_data}
}
    \centering
    \scriptsize
\begin{tabular}{ccccccc}
\toprule
          & Without removing  & \multicolumn{2}{c}{Removing 10000}   & \multicolumn{2}{c}{Removing 20000}   \\
                      &&  Memorized &  Random &   Memorized &  Random \\ \midrule

ImageNet  & 68.21\% ± 0.98\% & 64.33\%± 0.84\% & 66.82\%± 0.77\%      & 58.90± 1.05\% & 62.88\%± 0.77\% \\
Food-101  & 58.96\% ± 1.33\% & 55.15\%± 0.96\%   & 56.83\%± 1.13\%        & 50.07\%± 0.76\% & 53.14\%± 1.21\%   \\
Flower102 & 60.11\% ± 1.12\% & 56.27\%± 1.14\% & 58.08\%± 0.89\%      & 51.48± 1.01\% & 55.29± 0.93\%  
\\ \bottomrule 
\end{tabular}
\end{table}

\begin{table}[t]
\caption{\textbf{Evaluation of memorization on different architectures. } 
We train encoders with different backbone architectures using SimCLR on CIFAR10. We report the average memorization of \name together with the resulting linear probing accuracy and the number of model parameters.
}
\label{tab:more_architectures}
\centering
\scriptsize
\begin{tabular}{cccc}
\toprule 
Architecture & \name & Acc.    & \# of Parameters \\
\midrule
Wide-ResNet50-2 & 0.350 ± 0.008      & 81.23\% ± 1.01\%   & 69M                       \\
ResNet50        & 0.339 ± 0.011      & 77.12\% ± 1.42\% & 25M                       \\
ResNet18        & 0.315± 0.013       & 71.07\% ± 1.08\%   & 11M                \\
\bottomrule
\end{tabular}
\end{table}

\section{Memorization Scores}
\label{app:mem-scores}

\begin{table}[t]
\captionof{table}{
\textbf{More results of statistical t-tests.} We provide evidence that the memorization scores are: for $S_C$ significantly above 0, for $S_I$ significantly below 0. The test is inconclusive for the hypothesis that scores for $S_S$ are equal or below 0. This indicates that the scores are not significantly different from 0. We also observe that the scores for $S_E$ are significantly above 0 but still significantly below the scores for $S_C$.
\label{tab:t-test-zero}
}
\centering
\scriptsize
\begin{tabular}{ccc}\toprule
    \textbf{Null Hypothesis} & p-value & effect size \\ 
    \cmidrule{1-3}
    $\mathcal{H}_0 := m(S_C) \le 0$ & 0 & 101.24 \\
    $\mathcal{H}_0 := 0 \le m(S_I)$ & 0 & 100.42 \\
    $\mathcal{H}_0 := 0 = m(S_S)$ & 0.471 & 0.821 \\
    $\mathcal{H}_0 := m(S_E) \le 0$ & 0 & 53.23 \\
    $\mathcal{H}_0 := m(S_C) \le m(S_E)$ & 0 & 60.44 \\
    \bottomrule
\end{tabular}
\end{table}

We make the following observations based on \Cref{fig:calibration}, \Cref{tab:t-test}, and \Cref{tab:t-test-zero}:

\begin{enumerate}
\item If the shared set $S_S$ is used in both $f$ and $g$ encoders, then the distribution of memorization scores for the data points from $S_S$ approximately follow Gaussian distribution with 0-mean. The memorization scores for the data points from $S_S$ are close to (concentrates at) 0.
\item The memorization scores for the \canaries $S_C$ used only in the training of encoder $f$ are significantly above 0.
\item The memorization scores for the independent $S_I$ data points included in the training set of only $g$ are significantly below 0.
\item Data points from $S_C$ have statistically significantly and meaningfully higher memorization scores than those from $S_S$ and $S_E$.
\item Data points from $S_I$ have statistically significantly and meaningfully lower memorization scores than those from $S_S$ and $S_E$.
\item Memorization scores are close to 0 for both $S_E$ and $S_S$ and they approximately follow the Gaussian distribution.
There is a difference in the mean scores between $S_E$ than $S_S$ since data points from $S_E$ are seen during training of neither $f$ nor $g$ while data points from $S_S$ are used for the training of both $f$ and $g$. 
\end{enumerate}

\section{Extended Analysis of Memorization in SSL}
\label{app:mem-def}

\subsection{Additional Background on SSL}
\label{app:extended_ssl_background}
Many theoretical works on SSL, perform their analyses under the assumption that the training dataset $S$ in SSL comes from an underlying unlabeled data distribution $\mathcal{D}$ which is modeled as having $K$ disjoint latent classes $\Gamma_1, \dots, \Gamma_K$~\citep{arora2019theoretical}. 
Owing to the unlabeled nature of $S$, information about $\mathcal{D}$ and the latent classes is not known during training. However, the concept of latent classes helps define a structure of the data distribution and is helpful for analyzing the performance of $f$, \eg on downstream tasks. 
A commonly used assumption is that augmentations preserve the latent classes, \ie if $x \in \Gamma_k$, $\Aug(x) \subseteq \Gamma_k$~\citep{wang2022chaos}.

\subsection{Analysis of SSL Frameworks and Alignment}
\label{app:all_ssl_has_alignment}
Standard SSL loss functions like the InfoNCE loss (see~\Cref{def:InfoNCE-loss-full}) can be decomposed into alignment and uniformity terms.

\begin{equation}
    \tiny
     \mathcal{L} (f, x) = 
     -\underbrace{\underset{{x^+ \sim Aug(x)}}{\mathbb{E}} f(x)^T f(x^+)}_\text{alignment} 
    +
    \underbrace{\underset{x^+ \sim Aug(x),\{x_i^-\}_{i=1}^l \sim S}{\mathbb{E}} 
    \log \left( \exp(f(x)^Tf(x^+)) +\sum_{i = 1}^l \exp(f(x)^T f(x_i^-)) \right)}_\text{uniformity}
    \label{def:InfoNCE-loss-full}
\end{equation}

\cite{zhang2022mask} show that MAE also implicitly  aligns the mask-induced positive pairs.
This is done through the masking, where the autoencoder is forced to reconstruct the same original image from two potentially disjoint (differently masked) views. 
MAE aligns explicitly in the output space, however, the decoder part is very shallow ($<10$\% of the encoder) and translates to the alignment in the latent feature space. This directly applies to other SSL frameworks which also append the additional shallow projection heads to the encoders and explicitly align only the final outputs instead of representations. The main difference between MAE and other SSL frameworks is a lack of the uniformity in the representation space, where the learned features lie in a low dimensional
subspace~\citep{hua2021feature,jing2022understanding}. The recovery of uniformity in MAE requires further enhancement of its loss with the additional term $\mathbb{E}_{x} \mathbb{E}_{x^-}  (f(x)^T f(x^-))^2$, where $x^{-}$ is a negative pair of $x$ (different data points than $x$). This reduces our definition of memorization to alignment (with augmentations) as the common property of the representations across all the considered SSL methods.

\subsection{Intuition Behind our Memorization Score}
\label{sec:understanding_memorization}

To provide intuition behind why it is meaningful to define memorization based on the alignment loss of data points and to use the leave-one-out style definition, we present a simple example with a one-dimensional input and latent space (so that the data can be defined with the $x$ coordinate and the representation with the $y$ coordinate) which we visualize in \Cref{fig:intuition}. 
The example highlights how a data point $x$ selected either as a \textit{standard} in-distribution or \textit{outlier} data point impacts the training algorithm and can cause different levels of memorization.

\begin{figure}[hbt!]
\centering
\begin{subfigure}{0.4\textwidth}
    \includegraphics[width=\textwidth]{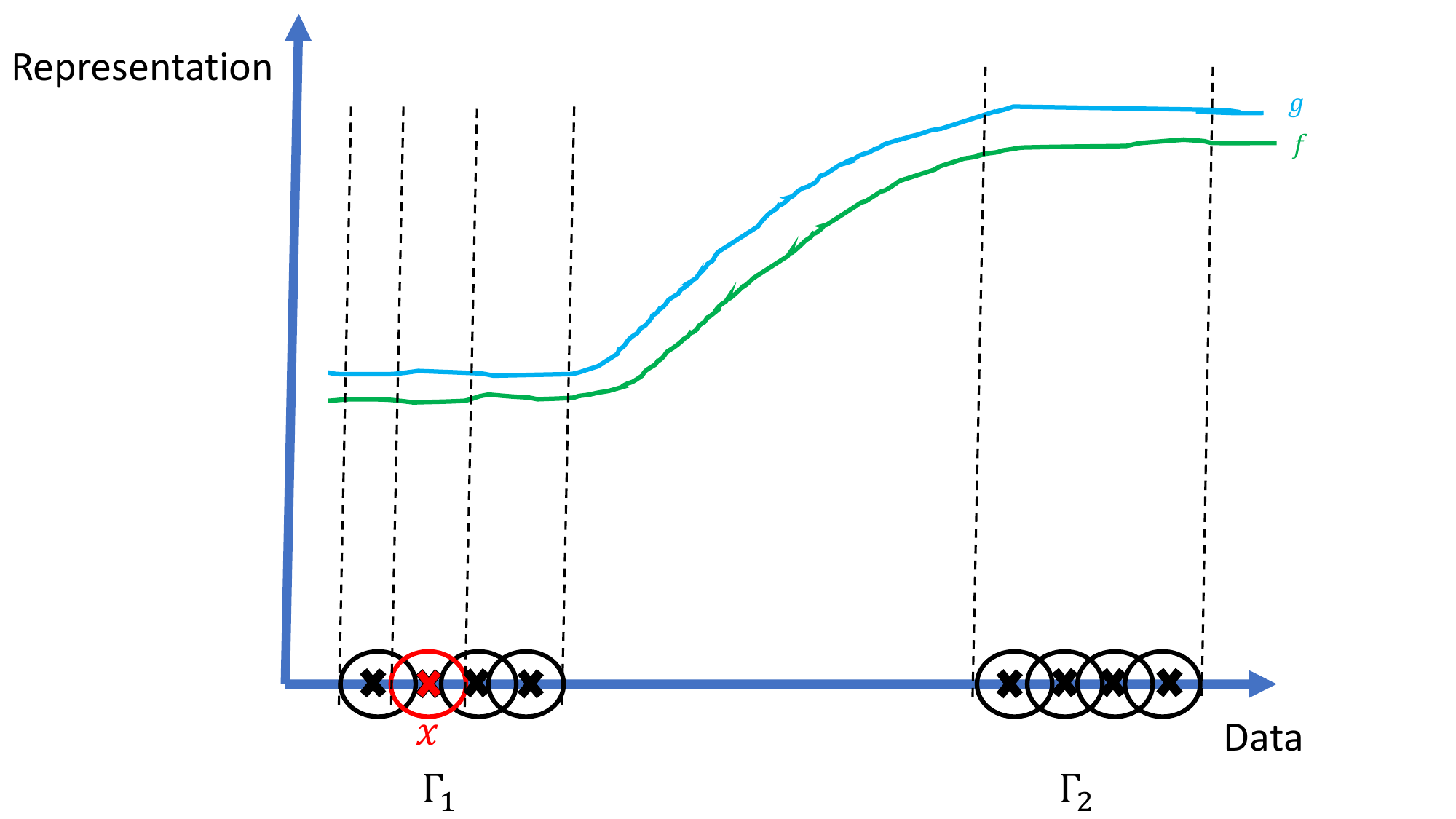}
    \caption{Case 1: $x$ is a standard data point.}
    \label{fig:int1a}
\end{subfigure}
\hspace{0.1\textwidth}
\begin{subfigure}{0.4\textwidth}
    \includegraphics[width=\textwidth]{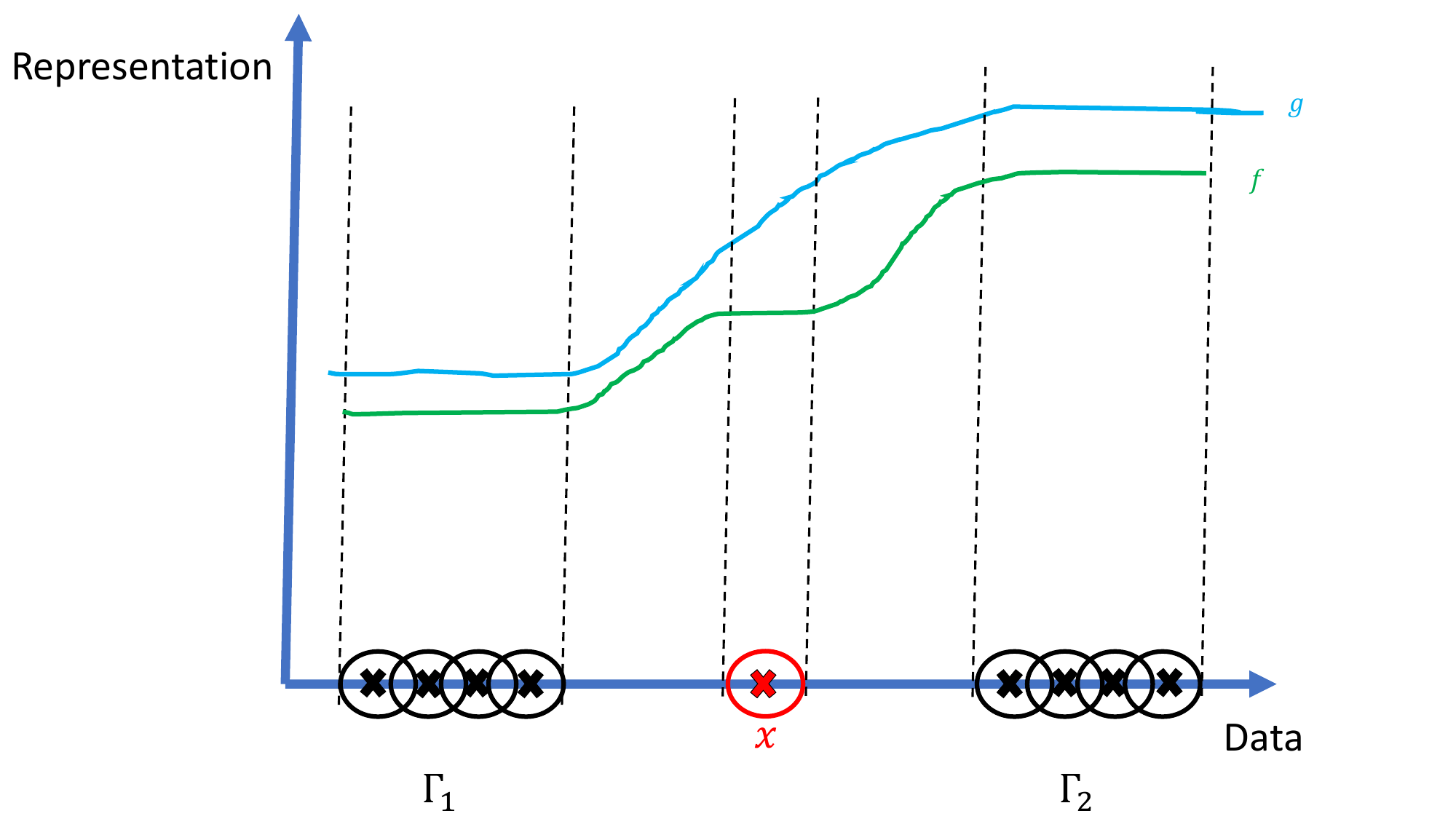}
\caption{Case 2: $x$ is an outlier data point.} 
\label{fig:int1b}
\end{subfigure}
\caption{\textbf{Intuition for the memorization of data points.} We provide a simple one-dimensional example to build the intuition behind our definition of memorization.
On the x-axis, we depict the input dimension and on the y-axis the representations returned by encoders $f$ and $g$. 
In (a), the data point $x$ which $f$ is trained on and $g$ is not, is a standard "inlier" data point. In (b), $x$ is an atypical data point or "outlier".
}
\label{fig:intuition}
\end{figure}

Assume without the loss of generality that there are two latent classes, $\Gamma_1, \Gamma_2$ %
in the data space. 
The augmentation sets form regions around the training data points which are represented by circles around the points in \Cref{fig:intuition}. We assume that the latent classes have central clusters where all data points have augmentation sets which overlap with at least one other augmentation set (this is similar to \citep{huang2021towards}). Let these central clusters be $\Gamma_1^0 \subseteq \Gamma_1$ and $\Gamma_2^0 \subseteq \Gamma_2$ respectively. To further simplify the example, we will assume that the overlap is such that for any $x_i$ in $\Gamma_1^0$ or $\Gamma_2^0$, the whole augmentation set $\aug(x_i)$ is involved in the overlap.

In our example, we consider the effect of training encoders $f \in \mathcal{F}$, \ie encoders trained on $S$ including $x$, and encoders $g \in \mathcal{G}$, \ie encoders trained on $S \setminus x$.
We assume that encoders $f$ and $g$ are trained until their respective training losses (over all training datapoints individually) are smaller than some constant $c$. Specifically, we assume a stronger notion of alignment, namely $c$-strong alignment:

\begin{definition}[$c$-Strong Alignment]
    We say that an encoder $f$ satisfies $c$-strong alignment on datapoint $x_i$ if $\; \forall \; x', x'' \in \Aug(x_i)$, $d(f(x'), f(x'')) \leq c$. 
\end{definition}

The difference between this definition and the standard definition of alignment is that the expected value has been replaced with a for all operator.
We also assume that all possible functions $f$ and $g$ are $L$-Lipschitz continuous \ie $d(f(x_1), f(x_2)) \leq L ||x_1 - x_2|| \; \forall x_1, x_2$. This assumption has been used by prior works on SSL e.g. \citep{huang2021towards}. Finally, when dealing with representations, we assume that they are normalized with $||f(x_i)||_2 = ||g(x_i)||_2 = r$. 

To analyze memorization with our leave-one-out definition (Definition~\ref{eq:memdef}), we will examine two main cases. First, we consider the case where $x$ is a standard data point from $\Gamma_1^0$, as in \Cref{fig:int1a}. Then, we know that every point in $\aug(x)$ is also a member of $\aug(x_i)$ for some $i$. Thus, even though there is no explicit constraint on the alignment of $g$ during training, this overlap will mean implicitly that $d(g(x'), g(x'')) \leq b \cdot c$ for any $x', x'' \in \Aug(x)$ so that $\lalign(g, x) \leq b \cdot c$ for all $g \in \mathcal{G}$. Here, $b$ is a constant and is related to the fact that there may be multiple augmentation sets that need to be traversed, when going from $x'$ to $x''$. Meanwhile, by assumption during training, we know directly that $\lalign(f, x) \leq c \; \forall f \in \mathcal{F}$.

Second, we consider the case where data point $x$ is an outlier as in \Cref{fig:int1b}.
In this case, the augmentation set $\Aug(x)$ does not overlap with other data points from $S$. Then there is no explicit or implicit constraint in the training objective for encoders $g$ and thus no upper bound on the alignment of $g$ on $x$. Therefore, the overall function class $\mathcal{G}$ consisting of all possible encoders $g$ is now a superset of $\mathcal{G}$ from the first example. Hence, the alignment of $g$ on $x$ now has a strictly higher value than in our first example. Meanwhile, encoders $f$ have the same constraint so that $\mathcal{F}$ will have the same alignment loss as in the first case.
Therefore, considering the difference $\mathbb{E}_{g \in \mathcal{G}} \lalign(g, x) - \mathbb{E}_{f \in \mathcal{F}} \lalign(f, x)$, this case has a strictly higher difference and thus higher memorization scores.

To summarize, in the first case the behaviour of models $f \in \mathcal{F}$ and $g \in \mathcal{G}$ does not differ significantly on $x$ due to the implicit constraints.
In contrast, in the second case, there is a more significant difference where only model $f$'s behavior is significantly shaped by $x$, indicating a higher level of memorization.

\subsection{The Link between Memorization and Generalization}
\label{sec:generalization_needs_memorization}

In the context of supervised learning, \cite{feldman2020does} has shown that memorization of outlier data points is required to achieve a close to optimal generalization error on natural data distributions, where data often follows a long-tailed distribution. %
Even though the concept of labels does not exist in SSL, we show in this section that memorization of outlier examples is still highly relevant to obtaining a good generalization. 
While we measure memorization on the level of SSL encoders' representations, following prior work, \eg \cite{huang2021towards, cabannes2023ssl}, we focus our notion of generalization on the level of \textit{downstream tasks}, as these types of tasks are typical use-cases of SSL models. To this end, in this section, we consider classification downstream tasks, and, as discussed in the problem setup, we consider the error that classifier $G_f$ achieves on downstream tasks from the same/different distributions as the unlabeled encoder training data. %
For analysis purposes, we assume that $G_f$ is a nearest centroids classifier so that $G_f(x) = \argmin_{k \in [K]}  ||f(x) - \mu_k||$, with $\mu_k = \mathbb{E}_{x \in \Gamma_k} f(x)$. \cite{huang2021towards} has shown that this is a special case of a general linear classifier. 

When revisiting the intuition on memorization described in the previous section (\Cref{sec:understanding_memorization}), we observe that for encoders $g \in \mathcal{G}$, the alignment loss is unlikely to be low in regions of the data space with outliers. 
In contrast, for encoders $f \in \mathcal{F}$, we observe good alignment over regions where outliers are present. 
We visualize this effect for a synthetic two dimensional data distribution in \Cref{fig:align}. %

We now describe this property more concretely and provide guarantees for the associated error that the downstream classifier will achieve. Our analysis will be centered around a particular outlier datapoint $x$ and the generalization error will be estimated with a testing dataset $S_{\mathrm{test}} = \{z_1, \dots, z_l\}$, consisting of points not used during training. %
We start by presenting some supporting definitions which will be helpful for this analysis. 

\begin{definition}[$\sigma$-overlap]
    We say that the augmentation set of a datapoint $z$ satisfies $\sigma$-overlap if there exists a region $\aug^{0}(z) \subseteq \aug(z)$ which overlaps with the augmentation set of a training datapoint $x_i \in S$ and so that $P[b \in \Aug^0(x)] \geq \sigma \cdot P[b \in \aug(x)]$.
\end{definition}

\begin{definition}[$\beta$-close]\label{def:betaclose}
    We say that a datapoint $z$ is $\beta$-close to a training datapoint $x_i \in S$ if $\min_{x_i' \in \aug(x_i)} ||x_i' - z|| = \beta$. 
\end{definition}

The following lemma presents a simple upper bound on the difference between the representations $f(x)$ and $f(z_i)$ for any test datapoint $z_i$. 

\begin{lemma}\label{lem:cstrong}
    Given an encoder $f$ satisfying $c$-strong alignment over point $x$ and a test datapoint $z_i \in S_{\text{test}}$ which satisfies $\beta_i$-closeness to point $x$, $d(f(x), f(z_i)) \leq L \beta_i + c$
\end{lemma}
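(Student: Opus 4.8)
The statement is a one-line consequence of chaining the three hypotheses via the triangle inequality, so the plan is short. First, I would unpack $\beta_i$-closeness: by \Cref{def:betaclose} applied to $z_i$ and the training point $x$, the quantity $\min_{x' \in \aug(x)} \|x' - z_i\|$ equals $\beta_i$, and since this is a minimum it is attained, so I would fix a witness $x^\star \in \Aug(x)$ with $\|x^\star - z_i\| = \beta_i$.

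Next I would apply the $L$-Lipschitz assumption on $f$ to this witness: $d(f(x^\star), f(z_i)) \leq L\,\|x^\star - z_i\| = L\beta_i$. Separately, I would invoke $c$-strong alignment of $f$ on $x$: since both $x$ and $x^\star$ lie in $\Aug(x)$ (using that the identity transformation is among the augmentations, so $x \in \Aug(x)$), the definition gives $d(f(x), f(x^\star)) \leq c$.

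Finally, since $d$ is a metric, the triangle inequality yields
\begin{equation*}
d(f(x), f(z_i)) \leq d(f(x), f(x^\star)) + d(f(x^\star), f(z_i)) \leq c + L\beta_i,
\end{equation*}
which is the claimed bound. The only point requiring a word of care — and the closest thing to an ``obstacle'' — is the membership $x \in \Aug(x)$ needed to apply $c$-strong alignment to the pair $(x, x^\star)$; this holds because $\Aug$ contains the identity augmentation (equivalently, one can simply replace $f(x)$ throughout by $f(x')$ for any fixed $x' \in \Aug(x)$ without changing the argument). Everything else is immediate from the definitions already stated.
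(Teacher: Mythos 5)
Your proof is correct and follows exactly the paper's argument: pick the witness $x^\star\in\Aug(x)$ from the $\beta_i$-closeness definition, bound $d(f(x^\star),f(z_i))$ by $L\beta_i$ via Lipschitzness and $d(f(x),f(x^\star))$ by $c$ via $c$-strong alignment, and chain them with the triangle inequality. Your remark that one needs $x\in\Aug(x)$ (identity augmentation) to invoke $c$-strong alignment on the pair $(x,x^\star)$ is a small point of care the paper leaves implicit, but the approach is the same.
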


\begin{proof}
        Follows directly from the triangle inequality. We have $f(x) - f(z_i) = f(x) - f(x') + f(x') - f(z_i)$ where $x'$ is the point obtained from Definition \ref{def:betaclose}. Then $d(f(x), f(z_i)) = ||f(x) - f(z_i)||_2 \leq ||f(x) - f(x')||_2 + ||f(x') - f(z_i)|| \leq c + L \cdot \beta_i$ where $c$-strong alignment and the Lipschitz property have been used.
\end{proof}

Similarly, we can upper bound the alignment loss $\lalign(f, z_i)$ with the following lemma. 

\begin{figure}[t]
\begin{subfigure}[b]{0.33\textwidth}
\includegraphics[scale=0.357,trim={0.1cm 0.05cm 0 0.2cm},clip]{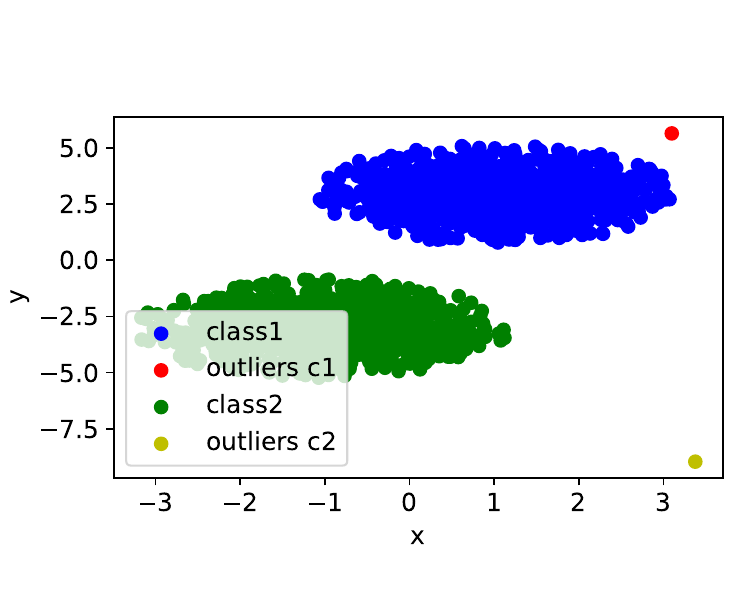}
\label{fig:datadist}
\caption{Data distribution.}
\end{subfigure}
    \begin{subfigure}[b]{0.33\textwidth}
\includegraphics[width=\textwidth]{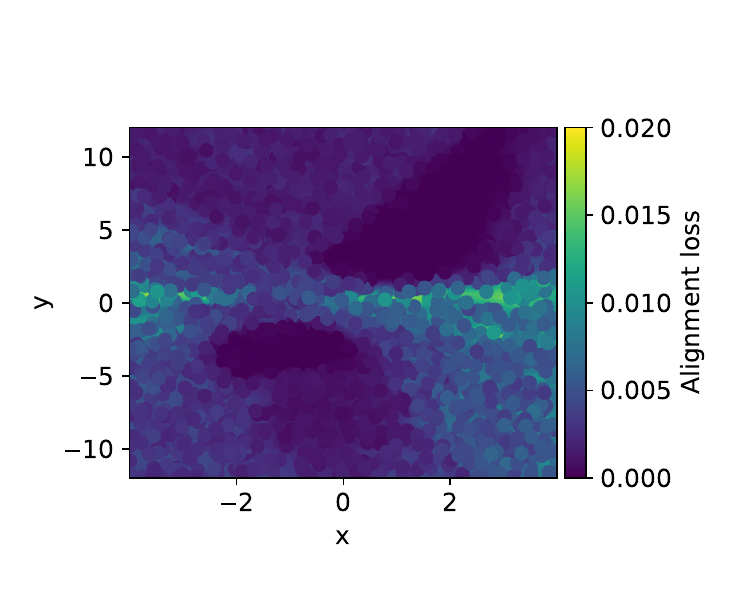}
    \caption{
    Model $f$
    }
    \label{fig:falign}
    \end{subfigure}
\begin{subfigure}[b]{0.33\textwidth}
\includegraphics[width=\textwidth]{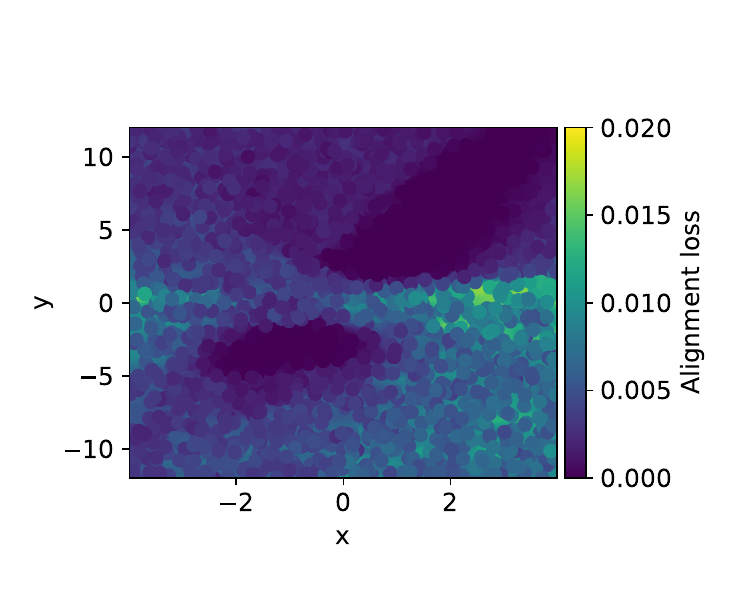}
\caption{
Model $g$
}
\label{fig:galign}
    \end{subfigure}
    \caption{
    \textbf{Training with outliers yields lower alignment loss over additional regions of the representation space.}
    (a) We generate a two dimensional distribution of data that consists of 
    two latent classes. For each latent class, we have a central part and one outlier example. %
    Then, we train two encoders with the InfoNCE loss---$f$ with the outliers, and $g$ without---that map from the data to a one dimensional representation space. (b) shows that by training with outliers, the resulting model gets a lower alignment loss in locations where there were outliers, whereas we see in (c) that when training without outliers, only the alignment loss in the main data cluster regions is decreased. 
}\label{fig:align}
\end{figure}

\begin{lemma}
    Given an encoder $f$ with $\lalign(f, x) \leq c$ and point $z_i$ which satisfies $\sigma$-overlap with point $x$, $\lalign(f, z_i) \leq \sigma \cdot c + (1-\sigma) \cdot L \cdot \mathbb{E}_{z', z'' \in (\Aug(z_i) \setminus \Aug(z_i) \cap \Aug(x))} ||z' - z''||$
\end{lemma}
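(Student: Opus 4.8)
The plan is to split the alignment loss of $z_i$ according to whether the two augmentations sampled in its definition land inside the overlap region promised by $\sigma$-overlap, and then to bound the two resulting pieces by the alignment guarantee on $x$ and by Lipschitz continuity, respectively. Concretely, I would write $\lalign(f,z_i) = \mathbb{E}_{z',z'' \sim \Aug(z_i)}[\dmetric{f(z')}{f(z'')}]$ and let $\Aug^0(z_i) \subseteq \Aug(z_i) \cap \Aug(x)$ be the overlap region supplied by the $\sigma$-overlap hypothesis, which by definition carries at least a $\sigma$-fraction of the augmentation mass of $z_i$. Then, by the law of total expectation, decompose this expectation into (i) the contribution of pairs $(z',z'')$ both drawn from $\Aug^0(z_i)$, which receives weight at least $\sigma$, and (ii) the complementary contribution, which receives weight at most $1-\sigma$.

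For piece (i), since $\Aug^0(z_i) \subseteq \Aug(x)$, any such pair $(z',z'')$ is also a pair of augmentations of $x$; hence its conditional contribution is controlled by the alignment loss of $f$ on $x$, which is $\leq c$ by hypothesis (equivalently, invoking $c$-strong alignment on $x$ as in Lemma~\ref{lem:cstrong} gives the pointwise bound $\dmetric{f(z')}{f(z'')} \leq c$). For piece (ii), I would use that every $f$ under consideration is $L$-Lipschitz, so $\dmetric{f(z')}{f(z'')} \leq L\,\|z'-z''\|$, and take the conditional expectation over $z',z'' \in \Aug(z_i) \setminus \Aug(z_i) \cap \Aug(x)$, which is exactly the term $\mathbb{E}_{z',z'' \in \Aug(z_i) \setminus \Aug(z_i) \cap \Aug(x)}\|z'-z''\|$ appearing in the statement. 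Recombining the two pieces with their mixture weights then yields $\lalign(f,z_i) \leq \sigma c + (1-\sigma) L\, \mathbb{E}_{z',z'' \in \Aug(z_i) \setminus \Aug(z_i) \cap \Aug(x)}\|z'-z''\|$, as claimed.

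The main obstacle I anticipate is the bookkeeping in the mixture step: because $\lalign$ averages over two \emph{independent} augmentations, a marginal overlap weight of $\sigma$ does not immediately transfer to the event that \emph{both} draws land in $\Aug^0(z_i)$ (naively it gives $\sigma^2$, or only $1-2(1-\sigma)$ after a union bound over the two coordinates). To obtain the stated constant I would formulate $\sigma$-overlap directly at the level of the augmentation pair -- treating $\Aug^0(z_i) \times \Aug^0(z_i)$ as a mixture component of the pair distribution with weight at least $\sigma$ -- or else absorb the resulting constant into the bound. A secondary, milder subtlety is that in piece (i) conditioning on the event ``both in $\Aug^0(z_i)$'' could in principle inflate the conditional mean above $\lalign(f,x)$; using the $c$-strong alignment version of the hypothesis sidesteps this entirely. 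Everything else is a routine application of linearity of expectation together with the triangle and Lipschitz inequalities already exercised in Lemma~\ref{lem:cstrong}.
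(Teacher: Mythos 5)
Your proposal follows essentially the same route as the paper's proof: decompose $\lalign(f,z_i)$ by whether the sampled pair lands in the overlap region $\Aug(x)\cap\Aug(z_i)$, bound the in-overlap piece via the alignment guarantee on $x$ and the out-of-overlap piece via $L$-Lipschitzness. The bookkeeping subtleties you flag (marginal mass $\sigma$ versus the weight of the joint two-draw event, the mixed pairs, and conditioning possibly inflating the in-overlap conditional mean above $c$) are real and are in fact glossed over in the paper's own two-line derivation, so your more careful treatment is, if anything, an improvement rather than a deviation.
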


\begin{proof}
    \begin{align*}
        &\lalign(f, z_i) = \mathbb{E}_{z', z'' \in \aug(z_i)} d(f(z'), f(z'')) \\
        &= P[b \in \Aug(x) \cap \Aug(z_i) | b \in \Aug(z_i)] \cdot \mathbb{E}_{z', z'' \in \aug(x) \cap \aug(z_i)} d(f(z'), f(z'')) \\ 
        &+ P[b \in \aug(z_i) \setminus (\Aug(x) \cap \Aug(z_i)) | b \in \Aug(z_i)] \cdot \mathbb{E}_{z', z'' \in \aug(z_i) \setminus (\Aug(x) \cap \Aug(z_i)) } d(f(z'), f(z'')) \\
        &\stackrel{(a)}{\leq} \sigma \cdot c + (1- \sigma) \cdot \mathbb{E}_{z', z'' \in \aug(z_i) \setminus (\Aug(x) \cap \Aug(z_i)) } d(f(z'), f(z'')) \\
        &\stackrel{(b)}{\leq} \sigma \cdot c + (1- \sigma) \cdot L \cdot \mathbb{E}_{z', z'' \in \aug(z_i) \setminus (\Aug(x) \cap \Aug(z_i)) } ||z' - z''|| 
    \end{align*}
    where (a) follows from the definition of $\lalign(f, x)$ and (b) follows from Lipschitzness. 
\end{proof}

Note that for the purpose of these lemmas, we assume that $d$ is the $\ell_2$ norm. We are now ready to analyze the relationship between the generalization error and memorization. We will compare between two algorithms $\mathcal{A}_1, \mathcal{A}_2$ where $\mathcal{A}_1$ has a greater degree of memorization on data point $x$. We will then show that models $f_1 \sim \mathcal{F}_1 = \mathcal{A}_1(S)$ will likely have a lower generalization error than models $f_2 \sim \mathcal{F}_2 = \mathcal{A}_2(S)$. We start by selecting the test points $z_i$ which are $\beta_i$ close to $x$ for $\beta_i \leq \beta$, where $\beta$ is a selected upper bound, \eg $ \beta = \frac{c}{L}$. Without loss of generality, let these points be $z_1, \dots, z_t$. %
Given that $x$ is an outlier datapoint, we now treat it as being part of a $K+1$st latent class, where the only datapoint from this latent class to appear in the training dataset $S$ is $x$. In other words, $x$ can be seen as a singleton example~\citep{feldman2020does}. On the basis of closeness in the data space, we will then also assume that all of $z_1, \dots, z_t$ belong to the same latent class as $x$ \ie to $\Gamma_{K+1}$.

We now claim that for cases where the complexity of encoders learnt by algorithms $\mathcal{A}_1, \mathcal{A}_2$ is the same and where learning a good representation on $x$ is not trivial, $\mathbb{E}_{f_1 \sim \mathcal{F}_1} \lalign(f_1, x) < \mathbb{E}_{f_2 \sim \mathcal{F}_2} \lalign(f_2, x)$. This is because for models $f_2$, the point $x$ is not memorized and thus $\lalign(f_2, x) \approx \lalign(g_2, x)$ which we can expect will be larger than the alignment loss when including $x$ as a training data point. Note that here we also use the fact that the range of possible values the alignment loss can take are the same for both algorithms since $0 \leq \lalign \leq 2r$ as a result of the representations being normalized. With this claim, we can then assume that encoders $f_1$ satisfy $c$-strong alignment for some value of $c$, based on which Lemma \ref{lem:cstrong} will imply that $d(f_1(x), f_1(z_i)) \leq L \beta + c$ for $1 \leq i \leq t$. Meanwhile, encoders $f_2$ do not have such a guarantee and thus while there may exist some encoders $f_2$ which do satisfy this bound, in expectation we will likely have $d(f_2(x), f_2(z_i)) > d(f_1(x), f_1(z_i))$.

We now analyze the error of the linear classifier on the datapoints $z_1, \dots, z_t$. 
From the form of the models $G_f$, we know that the decision rule is to select class $K+1$ if $||f(z_i) - \mu_{K+1} || \leq ||f(z_i) - \mu_{k} || \; \forall k \leq K$. In this case, since $x$ is the only training point from latent class $K+1$, $\mu_{K+1} = f(x)$. Now while reasoning about the class centers is difficult based on a single datapoint changing and the different algorithms that are used, we note that a smaller value of $d(f_1(x), f_1(z_i))$ can help encoders $f_1$. Given that the upper bound on the alignment (Lemma \ref{lem:cstrong}) is certain to hold for encoders $f_1$, we can thus have provable guarantees on the error the classifier achieves over these testing datapoints (assuming sufficient class center separation). For encoders $f_2$, it is unlikely that all encoders will lead to good predictive accuracy of the classifiers. Therefore, we can see a relationship between memorization of the datapoint $x$ and the (average) error of the classifiers on nearby datapoints (which is a component of the overall generalization error). This can thus show a potential correlation between memorization and generalization error. We leave a more thorough investigation of this concept to future work.

\subsection{Practical Implications of Memorization}
\label{app:practical_implications}

While our work is interested in studying the fundamental properties of SSL memorization to deepen our understanding of this learning paradigm and to reveal similarities and dissimilarties with supervised learning and between different SSL frameworks, it also has some practical implications.

\paragraph{Data Privacy.} 
Our method supports studying which data points experience highest memorization by the encoder. These data points are particularly prone to privacy leakage. Based on the insights from our memorization score, depending on the type of use case of such encoders, appropriate action (such as differential privacy, potentially with stronger guarantees for the memorized data points~\citep{pdp}) can be taken during or after the training to limit the leakage.

\begin{table}[t]
\caption{\textbf{Training on the most memorized data points}. We traine a ResNet50 on SimCLR with CIFAR10 (25k training data points) and calculated the memorization score over all data points. We then train again from scratch with the [25k (all), 24k, 22k, 20k, 16k, 12k] most memorized data points. We report linear probing accuracy on CIFAR10, CIFAR100, and STL10. %
Our results highlight that by training on the most memorized data points, we can outperform or match the performance of the encoder trained on the full 25k data points.}
\label{tab:coresets}
\centering
\begin{tabular}{cccc}
\toprule
Retained Points & CIFAR10         & CIFAR100    & STL10       \\
\midrule
25k (full encoder)   & 63.3\% $\pm$ 0.92\%     & 61.1\%$\pm$1.14\%   & 61.6\%$\pm$0.83\%   \\
24k (most memorized) & \textbf{64.4\% $\pm$ 1.03\%} & \textbf{61.3$\pm$0.98\% }    & \textbf{61.7$\pm$1.18\% }    \\
22k (most memorized) & \textbf{63.8\% $\pm$ 0.76\% }    & \textbf{61.8$\pm$1.24\%} & \textbf{62.4$\pm$1.05\%} \\
20k (most memorized) & 63.2\% $\pm$ 1.07\%     & 60.8\%$\pm$0.68\%   & 61.1$\pm$1.05\%     \\
16k (most memorized) & 61.8 $\pm$ 1.11\%       & 58.4\%$\pm$0.91\%   & 59.9$\pm$0.89\%     \\
12k (most memorized) & 59.7\% $\pm$ 0.74 \%    & 55.6\%$\pm$1.32\%   & 55.2$\pm$1.24\% \\
\bottomrule
\end{tabular}
\end{table}

\paragraph{Coreset Selection.}
We also show that our method is related to the research line of coreset selection~\citep{paul2021deep,sener2018active,tsang2005core}, \ie the identification of (smaller) data subsets that can be leveraged for training more efficiently while obtaining the same performance.
In the same setup as \Cref{fig:mem_equals_gen} in \Cref{sec:gen_needs_mem} 4.4, we trained a ResNet50 on SimCLR with CIFAR10 (25k training data points) and calculated the memorization score over all data points. We then trained the model again from scratch with the [25k (all), 24k, 22k, 20k, 16k, 12k] most memorized data points. We report the downstream accuracy on CIFAR10, CIFAR100, and STL10 in \Cref{tab:coresets}. 
Our results highlight that by training only on the subset of most memorized data points, we can even outperform the encoder trained on the full dataset, or match its performance with a significantly smaller training dataset (up to 25\% smaller). Thereby, our method can lead to new learning strategies that could dramatically (1)  reduce training times and (2) reduce data and memory requirements for the SSL encoders (which are both extremely high under current methods).

\end{document}